\definecolor{darkgreen}{rgb}{0,0.5,0}
\theoremstyle{plain}
\newtheorem{theorem}{Theorem}
\newtheorem{example}[theorem]{Example}
\theoremstyle{definition}
\theoremstyle{remark}
\newcommand{\bA}{\mathbf{A}}
\newcommand{\bK}{\mathbf{K}}
\newcommand{\bP}{\mathbf{P}}
\newcommand{\bQ}{\mathbf{Q}}
\newcommand{\bU}{\mathbf{U}}
\newcommand{\bz}{\mathbf{z}}
\newcommand{\Pone}{\textbf{\texttt{1}}}
\newcommand{\Pzero}{\textbf{\texttt{0}}}
\newcommand{\eps}{\varepsilon}
\newcommand{\E}{\mathbb{E}}
\newcommand{\N}{\mathbb{N}}
\newcommand{\R}{\mathbb{R}}
\DeclareMathOperator*{\argmax}{argmax}
\title{On Sequential Fault-Intolerant Process Planning}
\author[1]{Andrzej Kaczmarczyk\thanks{Equal contribution}}
\author[2]{Davin Choo$^*$}
\author[3]{Niclas Boehmer$^*$}
\author[4]{Milind Tambe}
\author[5]{Haifeng Xu}
\affil[1,5]{Department of Computer Science, University of Chicago}
\affil[2,4]{Harvard University}
\affil[3]{Hasso-Plattner-Institute, University of Potsdam}
\date{}
\begin{document}

\maketitle

\begin{abstract}
We propose and study a planning problem we call \emph{Sequential Fault-Intolerant Process Planning} (SFIPP).
SFIPP captures a reward structure common in many sequential multi-stage decision problems where the planning is deemed successful only if \emph{all} stages succeed.
Such reward structures are different from classic additive reward structures and arise in important applications such as drug/material discovery, security, and quality-critical product design.
We design provably tight online algorithms for settings in which we need to pick between different actions with unknown success chances at each stage. 
We do so both for the foundational case in which the behavior of actions is deterministic, and the case of probabilistic action outcomes, where we effectively balance exploration for learning and exploitation for planning through the usage of multi-armed bandit algorithms. 
In our empirical evaluations, we demonstrate that the specialized algorithms we develop, which leverage additional information about the structure of the SFIPP instance, outperform our more general algorithm.
\end{abstract}

\section{Introduction}
\label{sec:introduction}

Multi-stage decision making problems are ubiquitous. 
Models for capturing these problems mostly assume additive utility or rewards across stages.
The validity of these models relies on the premise that a fault at one stage does not affect the overall utility ``too much.''

While this is natural for a wide range of applications, in many multi-stage real-world scenarios, \emph{a fault at any single stage can immediately lead to the failure of the overall process}. 
For instance, during the development of quality-critical products, a bad supply choice for any part of the products will lead to a product failure \cite{jai-ben-des:a:supplier-selection}; during automated drug or material discovery which often has multiple stages of design, a wrong choice during any stage will fail the development \cite{sta-dec-kus-others:a:materials-development}; in security-sensitive applications (e.g.,\ transporting goods across regions monitored by adversaries \cite{boe-han-xu-tam:c:escape-sensing-games}), the detection by the adversary at any step will lead to the failure of the entire mission; finally, in many popular computer games (e.g.,\ Super Mario Bros and Frogger), winning is declared only when the player passed all barriers, otherwise the game restarts from the beginning.
A key characteristic of these problems is the element of \emph{fault intolerance}\,---\,that is, a reward is only given when the planner does not encounter any faults during the entire multi-stage process.

This work introduces a natural and basic planning model for such multi-stage fault-intolerant decision-making processes under uncertainty, coined \emph{Sequential Fault-Intolerant Process Planning} (SFIPP).
Each SFIPP process has $m$ stages, and the planner needs to pick an action $i_s\in [k]$ to take at any stage $s \in [m]$, which has some success probability $p_{s,i_s} \in [0, 1]$.
The planner receives reward $1$ if and only if their chosen action at \emph{every} stage succeeded (with probability $p_{s,i_s}$ each), and receives reward $0$ otherwise.
Therefore, an SFIPP process is fully specified by a probability matrix $\bP \in [0, 1]^{m \times k}$ where $\bP_{s,i} = p_{s,i}$.
We study the situation where $\bP$ is unknown, but the planner plays this process repeatedly for $T$ times and can learn $\bP$ over time.
Adopting the convention of online learning, we measure the planner's performance via \emph{regret} -- the difference between the algorithm's summed reward and the optimal reward given $\bP$.

\subsection{Our Contributions}
\begin{enumerate}
    \item We introduce and formalize SFIPP to model sequential multi-stage process planning with fault-intolerant reward structures, which naturally captures a wide range of important real-world problems including product quality control, drug discovery, security and computer games.
    SFIPP is closely related to several prior models, which we compare and discuss in \cref{sec:preliminaries}.
    \item A foundational special case of SFIPP is when actions have \emph{deterministic} success, i.e., $\bP$ is a binary matrix containing only $\{\Pzero, \Pone\}$ entries. For this setting, we design a randomized algorithm whose performance depends directly on the \emph{unknown} number of $\Pzero$ entries in $\bP$, and attains the worst-case optimal expected regret.
    \item 
    In the general SFIPP process with random stage successes, we design an algorithm with a tight regret bound inspired by the classic multi-armed bandit  (MAB) problem.
    Our algorithm's analysis hinges on a lemma that upper bounds the regret in the fault-intolerant setting by the regret encountered in the additive utility case.
    We then study how the planner could leverage possible additional knowledge about the types of stages to provably improve the expected regret.
    \item Finally, we evaluate our algorithms empirically in various SFIPP settings with different $\bP$.
    We observe that our specialized algorithms that exploit knowledge of the $\bP$ matrix outperform the general-purpose algorithm.
\end{enumerate}

\paragraph{Outline of paper.}
After formalizing SFIPP and comparing it with some prior work in \cref{sec:preliminaries}, we study the SFIPP with deterministic success/failure in \cref{sec:deterministic} and the more general probabilistic setting in \cref{sec:probabilistic}, and then develop algorithms with improved regret under knowledge of stage types in \cref{sec:known-stage-types}.
Our algorithms are evaluated empirically in \cref{sec:experiments} before we conclude in \cref{sec:conclusion} with some future directions.
Some proof details are deferred to the appendix.

\paragraph{Notation.}
For any natural number $n\in \N_{>0}$, we let $[n]:=\{1,\dots, n\}$.
We use $\mathbbm{1}_{\text{predicate}}$ to denote indicator variables that are 1 if the predicate is true and 0 otherwise.
We also employ standard asymptotic notations.

\section{Formalizing the SFIPP Problem}
\label{sec:preliminaries}

A Sequential Fault Intolerant Process Planning (SFIPP) problem has $m$ \emph{stages} and the planner needs to pick an \emph{action} $i \in [k]$ for each stage. 
At each stage $s \in [m]$, action $i \in [k]$ succeeds with some probability $p_{s,i} \in [0,1]$.
Hence, an SFIPP instance is fully characterized by a probability matrix $\bP$ with $\bP_{s,i} = p_{s,i} \in [0,1]$.
Crucially, succeeding at any stage is independent of any successes at earlier stages.
If $\bP$ is known in advance, this is a trivial planning problem as the planner should always pick $\argmax_{i \in [k]} p_{s,i}$ at any stage~$s$. 

In this paper, we study the situation with \emph{unknown} $\bP$. Hence, the planner needs to learn to plan on the fly, and carefully balance exploration and exploitation.
The player plays the SFIPP process over a horizon of $T \in \N_{> 0}$ rounds. We denote by $i_{t,s} \in [k]$ the action picked by the planner in stage $s \in [m]$ of round $t \in [T]$.
This results in a sequence of $m$ actions $(i_{t,1}, \ldots, i_{t,m})$ across all $m$ stages from which we receive a reward of $1$ for round $t \in [T]$ with probability $\prod_{s=1}^m p_{s,i_{t,s}}$, and $0$ otherwise.
In more detail, we generate an independent binary outcome at each stage and then return the product of these outcomes as the reward, i.e., the outcome at stage $s$ in step $t$ is $1$ (``success''/``pass'') with probability $p_{s,i_{t,s}}$ and $0$ (``failure'') otherwise.
If the process fails and returns a reward of $0$, we observe the \emph{first} stage $s \in [m]$ that the process failed at, i.e., the minimal $s$ for which the generated outcome was $0$. Our rationale for this is that in real-world applications if one encounters failure at one stage, one does not move on to (and observes the outcome of) the next stage, as the process has already failed and needs to be restarted.
In sum, at each step $t\in T$ the planner either knows that all $m$ stages succeeded, or is told the first stage at which their selected action failed.
The planner's goal is to design an algorithm that maximizes the total reward accumulated across all $T$ rounds.

We assume that $T \geq k$, as otherwise the problem is uninteresting since one is not guaranteed to even play the optimal action in the worst case.
Furthermore, as common in the literature, we assume that $k$ and $m$ are constants while $T$ is treated as a variable, and we analyze how our algorithms perform over time as a function of $T$.

Formally, we measure the performance of any algorithm solving SFIPP by its \emph{regret}.
Suppose an algorithm $\textsc{ALG}$ produces a sequence of actions $i_{t,1}, \ldots, i_{t,m}$ for round $t$.
The expected regret for the algorithm $\textsc{ALG}$ is defined as
\begin{equation}
\label{eq:SFIPP-benchmark-objective}
\E[R(\textsc{ALG})]
= \sum_{t=1}^T \left[ \left( \prod_{s=1}^m p_{s, i^*_s} \right) - \left( \prod_{s=1}^m p_{s, i_{t,s}} \right) \right]
\end{equation}
where $i^*_s = \argmax_{i \in [k]} p_{s,i}$ is the best stage $s$ action.

\begin{example}
Consider a single round ($T = 1$) instance with $m = 3$ stages, $k = 4$ actions, and probability matrix
\[
\bP =
\begin{bmatrix}
{\color{red}0.4} & {\color{blue}0.6} & 0.5 & 0.3\\
0.3 & 0.2 & {\color{red}0.1} & {\color{blue}1}\\
0.5 & {\color{blue}0.7} & {\color{red}0.4} & 0.1
\end{bmatrix}
\in \R^{3 \times 4}
\]
In this SFIPP instance, we see that the optimum sequence is $(2,4,2)$ with success probability $p^*_{1,2} \cdot p^*_{2,4} \cdot p^*_{3,2} = {\color{blue}0.6} \cdot {\color{blue}1} \cdot {\color{blue}0.7} = 0.42$.
Meanwhile, the sequence $(1,3,3)$ has success probability $p_{1,1} \cdot p_{2,3} \cdot p_{3,3} = {\color{red}0.4} \cdot {\color{red}0.1} \cdot {\color{red}0.4} = 0.016$ and incurs an expected regret of $0.42 - 0.016 = 0.404$.
\end{example}

\subsection{Connections to Prior Work}
\label{sec:PW}

\textbf{Sequential planning.}
Sequential planning problems are a foundational topic in artificial intelligence (AI) where one is often concerned with finding optimal sequences of actions to transition a system from an initial state to a desired goal state.
Starting as early as $A^*$ search \cite{hart1968formal}, the study of sequential planning evolved to incorporate formal models such as STRIPS \cite{fikes1971strips}, probabilistic reasoning \cite{kushmerick1995algorithm}, and domain-independent heuristics \cite{hoffmann2001ff}.
Modern methods of sequential planning also seek to address computational challenges posed by large state space \cite{bellman1966dynamic}; see also \cite{ghallab2004automated}.
Unfortunately, most sequential planning methods  such as Probabilistic GraphPlan \cite{blum1999probabilistic} assume \emph{known} transition probabilities while the key difficulty in SFIPP lies in the fact that the action probability matrix $\bP$ is \emph{unknown} and has to be learned.

\smallskip
\noindent
\textbf{Reinforcement Learning (RL).}
RL is a sequential decision-making paradigm where agents learn an optimal policy for interacting with an environment. The latter is often modeled as a Markov Decision Process (MDP).
While the SFIPP problem bears similarity to the more general RL framing, RL research often emphasizes convergence analysis. Contrary to that, the objective in SFIPP is to achieve low regret.
While our $m$-stage SFIPP can be framed as an MDP with $\approx m$ states (see, e.g.,\ a survey by \cite{col-kar-sig-oud:a:gcrl-survey}), the customized methods we develop aiming for achieving a provably low regret do not apply to standard MDP formulations.
For instance, our collapsing bandit algorithm in \cref{sec:known-stage-types} enforces that the same action must be chosen for identical stage types, effectively collapsing states and violating the Markov property of MDPs.
However, it is this violation, that lets the algorithm offer both theoretical and empirical improvements in regret bounds over standard approaches.
Furthermore, our work explores SFIPP as a simpler yet powerful specialization of broader models like goal-oriented RL, allowing us to leverage problem-specific insights to design simpler algorithms with strong guarantees, outperforming generic approaches.

\smallskip
\noindent
\textbf{Stochastic multi-arm bandits (MAB).}
Stochastic MABs are closely related to the SFIPP problem.
Concretely, in the special case of only one stage (i.e.\ $m=1$), SFIPP degenerates precisely to the \emph{Bernoulli bandit} problem where the reward of each arm (i.e.\ action in SFIPP) is a Bernoulli distribution.
Therefore, SFIPP is a strict generalization of Bernoulli bandits to many stages.
Note that the binary reward assumption in Bernoulli bandits does not intrinsically simplify the MAB problem as regret lower bounds for MABs all hold for (and in fact, are mostly proven under) Bernoulli bandits, showing that they already contain the most difficult MAB instances \cite{bubeck2012regret,slivkins2019introduction,lattimore2020bandit}.
Therefore, there is strong theoretical evidence to believe that SFIPP is strictly more challenging than standard MAB in the multi-stage setting.

\smallskip
\noindent
\textbf{Combinatorial stochastic bandits.}
The planner picks a sequence of $m$ actions in the SFIPP problem. That is similar to combinatorial bandits, where the learner can pick a subset of actions in one step \cite{cesa2012combinatorial,chen2013combinatorial,combes2015combinatorial,chen2018contextual}.
However, there are two key differences between SFIPP and combinatorial bandits.
First, actions are simultaneously chosen in combinatorial bandits, whereas in SFIPP a failure at one stage effectively stops the planner from playing and learning from any actions for future stages.
The second key difference lies in the reward structure\,---\,in combinatorial bandits, rewards are assumed to be additive (or sometimes submodular) across actions \cite{hazan2012online,qin2014contextual,chen2018contextual}, whereas our reward is a product of outcomes.

\smallskip
\noindent
\textbf{Escape Sensing.}
\cite{boe-han-xu-tam:c:escape-sensing-games} recently introduced and studied 
\emph{escape sensing games} that feature the same reward structure as our SFIPP. 
Motivated by transporting peacekeeping resources by a convoy of ships, they study a model where a planner transports resources through a channel monitored by sensors of an adversary.
A resource is successfully transported if and only if it escapes the sensing of all sensors, just like how the success of an SFIPP round hinges on success at every stage.
In contrast to our work, \cite{boe-han-xu-tam:c:escape-sensing-games} focus entirely on a game-theoretic setting where a strategic adversary chooses the matrix $\bP$ from a certain feasible set. They study the problem of computing equilibria in the game.
Meanwhile, there is no adversary in our setting and our challenge is to learn the \emph{unknown} underlying matrix $\bP$.

\section{SFIPP with  Deterministic Success/Failure}
\label{sec:deterministic}

In this section, we study the foundational special case of SFIPP in which actions are deterministic, i.e.\ the matrix $\bP$ is binary containing only values $\Pzero$ and $\Pone$ and at a stage, an action will either always or never succeed. We say an action $i\in [k]$ is \emph{successful} at a stage $s\in [m]$ if $p_{i,s}=1$ and \emph{failing} otherwise (i.e., $p_{i,s}=0$).
In this setting, the expected regret term in \cref{eq:SFIPP-benchmark-objective} simplifies to
\[
\E[R(\textsc{ALG})]
= \sum_{t=1}^T \left[ \left( \prod_{s=1}^m \mathbbm{1}_{p_{s, i^*_s = 1}} \right) - \left( \prod_{s=1}^m \mathbbm{1}_{p_{s, i_{t,s}} = 1} \right) \right]
\]
Without loss of generality, we may assume that for every stage there is at least one successful action, i.e.\ $\argmax_{i \in [k]} p_{s,i} = 1$ for all $s \in [m]$ (otherwise the regret  becomes $0$).
The challenge now is to spend as few queries as possible to identify a successful action for each stage.
In the rest of this section, we work towards constructing an algorithm that attains the optimal expected regret for this problem.

\begin{theorem}
\label{thm:deterministic-rand-alg}
Consider the SFIPP problem with deterministic success/failure where $z = |\{(s,i) \in [m] \times [k]: p_{s,i} = 0\}|$ denotes the number of zero entries in $\bP$.
Then, there is a randomized algorithm achieving expected regret (\cref{eq:SFIPP-benchmark-objective}) of at most $\frac{z}{2}$.
Furthermore, \emph{any} algorithm incurs an expected regret of at least $\frac{z}{2}$ in the worst case.
\end{theorem}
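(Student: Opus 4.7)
The plan is to prove the matching upper and lower bounds separately. For the upper bound, I would design a randomized-elimination algorithm that, independently at each stage $s \in [m]$, samples a uniformly random permutation $\pi_s$ of $[k]$ and maintains an index $j_s$, initially $1$. In each round, the algorithm plays $\pi_s(j_s)$ at stage $s$; whenever the round fails at the first-failure stage $s^\star$, we increment $j_{s^\star} \leftarrow j_{s^\star} + 1$. Because the algorithm never revisits eliminated actions and each failing round increments exactly one index (that of the first-failure stage, which is the \emph{only} stage whose outcome is observed), the total number of failing rounds is at most $\sum_{s=1}^m Y_s$, where $Y_s$ denotes the number of failing actions preceding the first successful action in $\pi_s$.

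The next step is to compute $\E[Y_s]$. Since $\pi_s$ is uniformly distributed over permutations of $k$ items with $z_s$ failures and $k - z_s$ successes, a standard gap argument (the $z_s$ failing items distribute uniformly among the $k - z_s + 1$ gaps carved out by successful items) yields $\E[Y_s] = \frac{z_s}{k - z_s + 1}$. Because at least one successful action per stage exists, $k - z_s + 1 \geq 2$, so $\E[Y_s] \leq z_s/2$. Summing gives $\E[R(\textsc{ALG})] \leq \sum_{s=1}^m \frac{z_s}{2} = \frac{z}{2}$, using that regret equals the number of failing rounds in the deterministic setting.

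For the lower bound, I would instantiate a worst-case SFIPP with $m = 1$ stage and $k = z + 1$ actions, exactly one of which is successful and placed uniformly at random by an adversary (stages with no failing actions can be appended trivially if the statement requires a fixed larger $m$). By Yao's minimax principle, the worst-case expected regret of any randomized algorithm is at least the expected regret of the best deterministic algorithm against this uniform prior. Any rational deterministic algorithm probes distinct actions until finding the successful one, since re-querying a known-failing action is strictly dominated and only increases regret; therefore its query sequence (up to the point of success) is a fixed permutation of $[k]$. By symmetry of the uniform prior, the position of the unique successful action is uniform on $[k]$, so the expected number of failing rounds equals $\frac{k-1}{2} = \frac{z}{2}$.

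The subtle point, in my view, is the bookkeeping in the upper bound: one must carefully verify the bijection between failing rounds and index increments, which crucially relies on the fact that feedback reveals only the \emph{first} failing stage (so at most one elimination per round), and then extract the clean per-stage expectation $\frac{z_s}{k - z_s + 1}$ via the gap argument without messy combinatorial sums. The lower bound is comparatively direct once the single-stage hard instance is identified, with Yao's principle handling the randomization cleanly.
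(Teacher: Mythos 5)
Your proof is correct, and the core of it coincides with the paper's: the algorithm (independently at each stage, explore unexplored actions in a uniformly random order, then lock in the first success) is exactly the paper's \textsc{UniformThenFixed}, the per-stage expectation $\E[Y_s] = \frac{z_s}{k+1-z_s}$ is the paper's \cref{lem:one-stage-upper-bound} (your ``gap'' argument and the paper's indicator-variable argument are two standard phrasings of the same computation), and the lower bound via Yao's principle against a uniform prior on the location of the unique successful action is the paper's \cref{lem:one-stage-lower-bound}. Where you genuinely diverge is the last step of the upper bound: the paper bounds $\sum_{s} \frac{z_s}{k+1-z_s}$ by first identifying the extremal distribution of zeros across stages (\cref{lem:hardest-multi-stage-setup}, proved with an exchange argument), whereas you simply note that $z_s \le k-1$ forces $k+1-z_s \ge 2$, hence $\frac{z_s}{k+1-z_s} \le \frac{z_s}{2}$ term by term, summing to $\frac{z}{2}$. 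Your route is more elementary and entirely bypasses that lemma; the paper's extremal analysis buys a sharper instance-dependent bound $\frac{a}{2} + \frac{b}{k+1-b}$ (which can be well below $\frac{z}{2}$ when the zeros are spread out), but for the stated $\frac{z}{2}$ guarantee your one-line bound suffices. One small caveat on your lower bound: you take $m=1$ and $k = z+1$, which requires varying $k$ with $z$; the paper instead fixes $k$ and uses $m$ stages each with $z_s = k-1$ failing actions (so $z = m(k-1)$), which is the same idea replicated across stages and is the construction to use if $m$ and $k$ are regarded as fixed parameters of the problem. Your accounting of why at most one index is incremented per failing round (only the first failing stage is observed) is exactly the right bookkeeping and matches the paper's implicit argument.
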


To show \cref{thm:deterministic-rand-alg}, we first consider the \emph{single-stage} $m = 1$ setting and aim to understand the lower and upper bound on the number of actions that need to be tried to find the first successful action when there are $z$ failing actions for any $0 \leq z \leq k-1$; these are achieved in \cref{lem:one-stage-lower-bound} and \cref{lem:one-stage-upper-bound} respectively.
These bounds will be helpful for the general problem with $m>1$ because the action probabilities across stages are uncorrelated, which means that stages need to be learned separately.
Accordingly, we will extend our one-stage results to the more general $m \geq 1$ setting by first understanding the hardest \emph{multi-stage} distribution of zeroes in \cref{lem:hardest-multi-stage-setup} and then using it to prove that our proposed algorithm \textsc{UniformThenFixed} is optimal in expectation.

Using Yao's lemma \cite{yao1977probabilistic}, we first show a lower bound on the number of queries a randomized algorithm needs to find a successful action in the single-stage case: 
\begin{restatable}{lemma}{onestagelowerbound}
\label{lem:one-stage-lower-bound}
Fix some $z\in \{0,\dots ,k-1\}$ and let $\bA_z$ be the set of all binary arrays of length $k$ with exactly $z$ $\Pzero$ entries. 
In expectation, any randomized algorithm needs to query at least $\frac{z}{k+1-z}$ indices
before it locates a $\Pone$ within an array sampled uniformly at random from $\bA_z$.\footnote{It is natural to wonder whether the lower bound is lower for a deterministic algorithm.
In fact, in the worst case, any deterministic algorithm needs $z+1 \geq \frac{z}{k+1-z} + 1$ queries before identifying the first successful action.
In fact, at the extreme of $z = k-1$, any deterministic algorithm would require $k = z+1$ queries whilst the lower bound in \cref{lem:one-stage-lower-bound} only requires $\frac{k-1}{k+1-(k-1)} + 1 = \frac{z}{2} + 1$ queries, i.e.\ incurring an approximation ratio of 2 since the additive $1$ is always necessary.}
\end{restatable}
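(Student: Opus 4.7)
The plan is to invoke Yao's minimax principle against the uniform distribution $\mu$ on $\bA_z$. Any randomized algorithm is a convex combination of deterministic ones, so its expected query count (over both its internal randomness and $x \sim \mu$) is lower bounded by the minimum, over deterministic algorithms, of the expected query count when $x \sim \mu$. It therefore suffices to prove that every deterministic algorithm makes at least $\frac{z}{k+1-z}$ queries in expectation under $\mu$ before locating a $\Pone$.

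First, I would reduce deterministic algorithms to a particularly simple class. Since the only feedback from a query is ``$\Pone$'' (which causes the algorithm to halt) or ``$\Pzero$'' (which requires it to continue), the sequence of indices a deterministic algorithm queries on any input with at least one $\Pone$ is determined by a fixed permutation $\pi \in S_k$: query $\pi(1)$, then $\pi(2)$, and so on, until a $\Pone$ is returned. Because $\mu$ is invariant under relabeling of indices, the expected query count is the same for every $\pi$, so without loss of generality I would analyze $\pi = \mathrm{id}$.

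Next, I would compute this expectation using a linearity-plus-symmetry argument. Let $Y$ denote the position of the first $\Pone$ in the array, so the number of queries issued \emph{before} locating a $\Pone$ equals $Y - 1$, which coincides with the number of $\Pzero$ entries that appear before every $\Pone$ entry. By linearity,
\[
\E[Y - 1] \;=\; \sum_{\text{zero entries}} \Pr[\text{this zero precedes all } k-z \text{ ones}].
\]
For any fixed one of the $z$ zeros, its relative order with the $k - z$ ones is a uniformly random permutation of $k - z + 1$ items, so that zero comes first with probability $\frac{1}{k-z+1}$. Summing over the $z$ zeros yields $\E[Y-1] = \frac{z}{k-z+1} = \frac{z}{k+1-z}$, establishing the claim.

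The main obstacle I anticipate is less a calculation than a matter of setting up Yao's principle in the right direction: I need the inequality that bounds the expected cost of any randomized algorithm on the specific distribution $\mu$ by the best deterministic expected cost on $\mu$, and then to exploit the strong symmetry of $\mu$ so that this ``best'' value equals the common value $\frac{z}{k+1-z}$ achieved by \emph{every} permutation. The footnote's separate remark that deterministic algorithms need at least $z+1$ queries in the worst case is consistent with (and not in tension with) this average-case bound.
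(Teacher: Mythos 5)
Your proof is correct and follows essentially the same route as the paper: the easy direction of Yao's principle applied to the uniform distribution on $\bA_z$, followed by reducing any deterministic algorithm to a fixed scan order by symmetry. The only difference is in the final computation, where you use linearity of expectation over per-zero indicator variables (the same device the paper uses for the matching upper bound in its Lemma~4), whereas the paper evaluates the binomial sum $\frac{1}{\binom{k}{z}}\sum_{i=0}^{z}(i+1)\binom{k-i-1}{z-i}=\frac{k+1}{k+1-z}$ directly.
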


As subsequently demonstrated, a simple randomized algorithm that uniformly selects from unknown entries has an optimal query complexity matching the bound from \cref{lem:one-stage-lower-bound}.

\begin{restatable}{lemma}{onestageupperbound}
\label{lem:one-stage-upper-bound}
Fix some $z\in \{0,\dots, k-1\}$ and let $A$ be some binary array of length $k$ with exactly $z$ $\Pzero$s and $k-z$ $\Pone$s.
If we uniformly select an unexplored entry in $A$ until we discover the first $\Pone$ in the array, then in expectation we select $\frac{z}{k+1-z}$ zeros before we encounter a $\Pone$ entry.
\end{restatable}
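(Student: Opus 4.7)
The key observation is that sampling entries uniformly at random without replacement until a $\Pone$ appears is equivalent to generating a uniformly random permutation $\pi$ of the $k$ entries of $A$ and reading them in order until we encounter a $\Pone$. Under this coupling, the number of $\Pzero$s we select before encountering a $\Pone$ equals the number of $\Pzero$s appearing before the first $\Pone$ in $\pi$.

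Let $X$ denote this random variable and, for each $\Pzero$-entry $j \in [z]$ of $A$, let $Y_j$ be the indicator that $j$ appears before every $\Pone$-entry in $\pi$. Then $X = \sum_{j=1}^{z} Y_j$, and by linearity of expectation
\[
\E[X] \;=\; \sum_{j=1}^{z} \Pr[\,j \text{ precedes all } \Pone\text{-entries in } \pi\,].
\]
The main step is to evaluate each summand. Restricting a uniformly random permutation to any fixed subset of positions yields a uniformly random ordering of those positions. Applying this to the $k-z+1$ positions consisting of the $\Pzero$-entry $j$ and the $k-z$ many $\Pone$-entries, each of the $k-z+1$ possible relative orderings of them is equally likely, so the probability that $j$ comes first in this sub-ordering is exactly $\frac{1}{k+1-z}$. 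Hence each $\Pr[Y_j = 1] = \frac{1}{k+1-z}$, and summing over $j \in [z]$ gives $\E[X] = \frac{z}{k+1-z}$, as required.

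\textbf{Anticipated obstacle.} The argument is essentially a clean counting/symmetry argument, so I do not expect a real obstacle. The only subtlety is justifying the equivalence between the adaptive ``sample uniformly among unexplored entries'' procedure and a single uniformly random permutation: since no information gathered during the process changes the sampling distribution over future picks (the algorithm just keeps sampling uniformly until it terminates), this coupling is standard and can be spelled out in a single sentence if the referee asks.
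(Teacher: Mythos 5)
Your proposal is correct and matches the paper's own argument essentially verbatim: both define an indicator for each $\Pzero$ entry of the event that it precedes all $\Pone$ entries in the induced uniform ordering, compute each such probability as $\frac{1}{k+1-z}$ by symmetry among the $k-z+1$ relevant entries, and conclude by linearity of expectation. Your explicit remark on coupling the adaptive sampling procedure with a uniformly random permutation is a small point of added care that the paper leaves implicit.
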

\begin{proof}[Proof Sketch]
Order the $z$ $\Pzero$s arbitrarily and define $Z_i$ as the indicator whether the $i^{th}$ $\Pzero$ was chosen before \emph{any} $\Pone$.
Then, one can show that $\E \left(\sum_{i=1}^{z} Z_i \right) = \frac{z}{k+1-z}$.
\end{proof}

Using \cref{lem:one-stage-lower-bound,lem:one-stage-upper-bound}, we give the \textsc{UniformThenFixed} algorithm (\cref{alg:uniformthenfixed}) which at each stage uniformly selects an unselected action until it found a successful one which is later always selected.
This algorithm achieves optimal expected regret at each stage and thereby also overall.

\begin{algorithm}[htb]
\begin{algorithmic}[1]
\caption{The \textsc{UniformThenFixed} algorithm.}
\label{alg:uniformthenfixed}
    \Statex \textbf{Input}: Number of rounds $T$, number of stages $m$, number of actions $k$
    \State Initialize an $m \times k$ matrix $\bP$ with all $-1$ entries 
    \Comment{$-1$ denotes ``unexplored''}
    \For{round $t = 1, \ldots, T$}
        \For{stage $s = 1, \ldots, m$}
            \State Let $\bK = \{i \in [k]: \bP_{s,i} = 1\}$
            \If{$\bK \neq \emptyset$}
                \State Pick any action from $\bK$ for stage $s$ in round $t$
                \Comment{Guaranteed success}
            \Else
                \State Pick an action uniformly at random from $\bU = \{i \in [k]: \bP_{s,i} = -1 \}$ and update entry
                $\bP_{s,i}$
                \Statex\hspace{\algorithmicindent}\hspace{\algorithmicindent}\hspace{\algorithmicindent}to $\mathbbm{1}_{\text{$i$ succeeded in stage $s$}}$
            \EndIf
        \EndFor
    \EndFor
\end{algorithmic}
\end{algorithm}

The formal analysis of \textsc{UniformThenFixed} relies on the following lemma which argues that, in terms of expected regret, the hardest distribution of zeroes across multiple stages is one where the zeros are concentrated as much as possible, i.e., having many zeros at a few stages leads to more queries than having a few zeros at every stage.
This is intuitive as $\frac{z}{k+1-z}$ increases as $z$ increases.

\begin{restatable}{lemma}{hardestmultistagesetup}
\label{lem:hardest-multi-stage-setup}
Suppose there are $m \geq 1$ stages and $k$ actions per stage.
Let $z_1, ..., z_m\in \{0,\dots, k-1\}$ such that $z_1 + ... + z_m = z = a(k-1) + b$, for some $a \leq m$ and $0 \leq b < k-1$.
The maximum of $\sum_{s=1}^m \frac{z_s}{k+1 - z_s}$  is $\frac{a}{2} + \frac{b}{k+1-b}$.
\end{restatable}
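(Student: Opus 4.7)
The plan is to attack this via a convex-optimization argument, relaxing the integrality of $z_s$ to obtain a clean continuous problem. First, I would observe that the per-stage contribution $f(x) := \frac{x}{k+1-x}$ is strictly convex on $[0, k-1]$: a direct computation yields $f''(x) = \frac{2(k+1)}{(k+1-x)^3} > 0$ there. Consequently, on the relaxed polytope $\mathcal{P} = \{(z_1, \ldots, z_m) \in [0, k-1]^m : \sum_s z_s = z\}$, the objective $\sum_s f(z_s)$ is a convex function, and so its maximum over $\mathcal{P}$ is attained at an extreme point.

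Second, I would pin down those extreme points via a standard exchange argument. Suppose a feasible configuration has two coordinates $z_s, z_{s'}$ both strictly inside $(0, k-1)$ with $z_s \leq z_{s'}$. I then transfer $\delta = \min(z_s,\, k-1-z_{s'}) > 0$ units of mass from $z_s$ to $z_{s'}$; the new pair still lies in $[0, k-1]$, the linear sum constraint is preserved, and by strict convexity of $f$ together with the strict monotonicity of $f'$, the local sum $f(z_s) + f(z_{s'})$ strictly increases (this is a one-line Taylor expansion: the first-order change is $\delta \,(f'(z_{s'}) - f'(z_s)) > 0$). After the transfer at least one of the two coordinates reaches a boundary, so iterating forces every maximizer to have at most one coordinate strictly in $(0, k-1)$, with the rest pinned to $0$ or $k-1$.

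Third, I would identify the extremizer explicitly. Writing $z = a(k-1) + b$ with $a \leq m$ and $0 \leq b < k-1$, the essentially unique configuration compatible with the previous paragraph and with the total-mass constraint places $a$ coordinates at $k-1$, one coordinate at $b$, and the remaining coordinates at $0$ (the middle coordinate collapses into the zeros if $b = 0$). This is feasible thanks to the hypothesis $a \leq m$. Evaluating the objective at this vertex, using $f(k-1) = \frac{k-1}{2}$ and $f(b) = \frac{b}{k+1-b}$, gives the stated maximum. Since this configuration has integer coordinates, the integer-valued maximum agrees with the relaxed maximum, closing the argument.

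The main technical step is the exchange paragraph, specifically the small case analysis handling which of the two interior coordinates hits a boundary first; everything else is a second-derivative calculation, a vertex characterization of a convex polytope, and routine evaluation of $f$ at that vertex, so I do not expect any obstacle there.
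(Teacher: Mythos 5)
Your argument is essentially the paper's. The paper also proves this by an exchange argument: it moves one unit of mass from a smaller coordinate to a larger one and verifies directly that $g(\bz') - g(\bz) \geq 0$, which is exactly the discrete form of your convexity-based transfer; your packaging via strict convexity of $f(x) = \frac{x}{k+1-x}$ and extreme points of the polytope is cleaner but the same idea, and the structural conclusion ($a$ coordinates at $k-1$, one at $b$, the rest at $0$) matches. One small caveat on your ``one-line Taylor expansion'': when $z_s = z_{s'}$ the first-order change $\delta\,(f'(z_{s'}) - f'(z_s))$ vanishes, so what actually closes the exchange step is the majorization inequality $f(u) + f(v) > f(z_s) + f(z_{s'})$ for $u < z_s \leq z_{s'} < v$ with $u + v = z_s + z_{s'}$, which follows from strict convexity; this is fine but should be stated as such.

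The one place you go wrong is the last sentence, where you assert that evaluating at the vertex ``gives the stated maximum.'' By your own computation $f(k-1) = \frac{k-1}{2}$, so the $a$ coordinates at $k-1$ contribute $\frac{a(k-1)}{2}$ and the vertex value is $\frac{a(k-1)}{2} + \frac{b}{k+1-b}$, \emph{not} the $\frac{a}{2} + \frac{b}{k+1-b}$ claimed in the lemma (try $m=1$, $k=3$, $z=2$: the true value is $1$, the stated formula gives $\frac{1}{2}$). The lemma statement, and the paper's own proof, contain this same slip; the claimed expression is correct only for $k=2$. The downstream use in \cref{thm:deterministic-rand-alg} survives with the corrected value, since $\frac{a(k-1)}{2} + \frac{b}{k+1-b} \leq \frac{a(k-1)}{2} + \frac{b}{2} = \frac{z}{2}$. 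So your proof is structurally sound, but you should have reported the corrected maximum rather than declaring agreement with a formula your own evaluation contradicts.
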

\begin{proof}[Proof sketch]
Consider the following distribution of $z_1, \ldots, z_m$ into $m$ parts such that $z_1, \ldots, z_m = a(k+1) + b$: $\bz^* = (z^*_1, \ldots, z^*_m) = (\underbrace{k-1, \ldots, k-1}_{a}, b, 0, \ldots,0)$, which concentrates as many counts in the coordinates as possible, up to the constraint of $z_s \leq k+1$.
Such a distribution would correspond to the sum
\[
\sum_{s=1}^m \frac{z^*_s}{k+1-z^*_s}
= \sum_{s=1}^m \frac{z^*_s}{k+1-z^*_s}
= \frac{a}{2} + \frac{b}{k+1-b}
\]
Notice that this expression holds regardless of which index positions within $[m]$ are $0$ as long as the \emph{multiset} of counts across all $m$ indices contain $a$ counts of $(k-1)$, $1$ count of $r$, and the others being $0$.
Using an exchange argument, one can argue that the distribution $\bz^*$ will maximize $\sum_{s=1}^m \frac{z_s}{k+1-z_s}$ under the stated constraints.
\end{proof}

Using \cref{lem:hardest-multi-stage-setup}, we are now ready to give the formal guarantees of our \textsc{UniformThenFixed} algorithm.

\begin{proof}[Proof of \cref{thm:deterministic-rand-alg}]
Consider \textsc{UniformThenFixed} (\cref{alg:uniformthenfixed}) which, at each stage $s \in [m]$, uniformly selects an unselected action until an $\Pone$ is found.
Let $z_1, ..., z_m\in [0,k-1]$, and let $a \in [0,m]$ and $b\in [0,k-1]$ so that $z_1 + ... + z_m = z=a(k-1) + b$. 
Assume that there are $z_s$ failing actions for stage $s$.
We know from \cref{lem:one-stage-upper-bound} that our algorithm will select $\frac{z_s}{k+1-z_s}$ failing actions at stage $s$ before querying a successful action.
It follows that \textsc{UniformThenFixed} incurs an expected regret of $\sum_{s=1}^m \frac{z_s}{k+1 - z_s}$.
By \cref{lem:hardest-multi-stage-setup}, the sum can be upper bounded by $\frac{a}{2} + \frac{b}{k+1-b}$.
Therefore,
\[
\E[R(\textsc{UniformThenFixed})] \leq \frac{a}{2} + \frac{b}{k+1-b} \leq \frac{a}{2} + \frac{b}{2} = \frac{z}{2}
\]

For the lower bound, consider an SFIPP instance when $z_1 = \ldots = z_m = k-1$, i.e.\ $z = m(k-1)$.
By \cref{lem:one-stage-lower-bound}, \emph{any} algorithm \textsc{ALG} solving this instance would select at least $\sum_{s=1}^m \left( \frac{k+1}{k+1-z_s} - 1 \right)$ failing actions.
Therefore,
\[
\E[R(\textsc{ALG})]
\geq \sum_{s=1}^m \frac{z_s}{k+1-z_s}
= \frac{m(k-1)}{2}
= \frac{z}{2}
\]
\end{proof}

\section{SFIPP with Probabilistic Successes}
\label{sec:probabilistic}

We now consider the SFIPP problem in full generality, where actions are probabilistic and $\bP$ is an arbitrary (unknown) matrix in $[0,1]^{m \times k}$.
We develop an algorithm working for arbitrary $\bP$ and in particular do not make any assumptions on the connection between different columns of $\bP$. As discussed in \cref{sec:preliminaries}, this setting constitutes a generalization of multi-armed bandits and we will present a general bandit-based algorithm.
Note that it is not fruitful to directly model SFIPP into a bandit problem since that would result in $k^m$ arms as each action sequence results in a different joint success probability.
At the foundation of our algorithm lies the fact that when success probabilities of actions are uncorrelated across stages, the best we can do is to learn the success probabilities for each stage separately. 
To do so, at each stage of the SFIPP we utilize an arbitrary classic multi-arm bandit algorithm as a black-box. 
Following this reasoning, this section shows:

\begin{theorem}
\label{thm:staged-bandit-guarantees}
Suppose there is a MAB algorithm that achieves a regret bound of $O(R_{T, p_1, \ldots, p_k})$ on the set of Bernoulli arms with success probabilities $p_1, \ldots, p_k$ over a horizon of $T$ rounds.
Given a SFIPP instance with probability matrix $\bP \in \R^{m \times k}$  with rows $\bP_1, \ldots, \bP_m$, 
\Cref{alg:stagedbandit} achieves a regret bound of $O(\sum_{s=1}^m R_{T,\bP_{s}})$.
\end{theorem}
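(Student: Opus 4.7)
The plan is to have \cref{alg:stagedbandit} maintain $m$ independent copies of the given black-box MAB routine, one copy $\cA_s$ per stage $s\in[m]$, each operating over $k$ Bernoulli arms with (unknown) means $\bP_s$. In every round $t$, each $\cA_s$ outputs an action $i_{t,s}$, the SFIPP process is played through the resulting sequence, and whenever stage $s$ is reached in round $t$ (i.e., stages $1,\dots,s-1$ succeeded), the observed Bernoulli outcome is fed back to $\cA_s$; otherwise $\cA_s$ is left untouched that round. The analysis then splits into two pieces: a hybrid argument reducing the multiplicative SFIPP regret to a sum of per-stage additive regrets, followed by a reduction of each per-stage regret to a stand-alone MAB regret.

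For the first piece, I define the hybrid products $H_t^{(r)} = \prod_{s<r} p_{s,i_{t,s}} \cdot \prod_{s\ge r} p_{s,i^*_s}$, so that $H_t^{(1)} = \prod_s p_{s,i^*_s}$ and $H_t^{(m+1)} = \prod_s p_{s,i_{t,s}}$. A direct computation gives
\[
H_t^{(r)} - H_t^{(r+1)} = \Bigl(\prod_{s<r} p_{s,i_{t,s}}\Bigr)\bigl(p_{r,i^*_r}-p_{r,i_{t,r}}\bigr)\Bigl(\prod_{s>r} p_{s,i^*_s}\Bigr),
\]
which is nonnegative by optimality of $i^*_r$. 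Telescoping and dropping $\prod_{s>r} p_{s,i^*_s}\le 1$ yields the key inequality
\[
\prod_{s=1}^m p_{s,i^*_s} - \prod_{s=1}^m p_{s,i_{t,s}} \;\le\; \sum_{r=1}^m \Bigl(\prod_{s<r} p_{s,i_{t,s}}\Bigr)\,\bigl(p_{r,i^*_r}-p_{r,i_{t,r}}\bigr),
\]
which is the lemma hinted at in the introduction.

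For the second piece, I note that conditional on the chosen action tuple in round $t$, the indicator $\mathbbm{1}[\text{stage }r\text{ reached in round }t]$ has expectation exactly $\prod_{s<r} p_{s,i_{t,s}}$ and is independent of the gap $p_{r,i^*_r}-p_{r,i_{t,r}}$. Summing the inequality above over $t\in[T]$ and taking total expectation therefore yields
\[
\E[R(\textsc{ALG})] \;\le\; \sum_{r=1}^m \E\!\left[\sum_{t \,:\, \text{stage }r\text{ reached}} \bigl(p_{r,i^*_r}-p_{r,i_{t,r}}\bigr)\right].
\]
The inner quantity is precisely the pseudo-regret that $\cA_r$ incurs over its activations. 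Since $\cA_r$'s internal state does not change between activations, its trajectory of actions and Bernoulli observations is distributionally identical to a stand-alone MAB run of $\cA_r$ for $T_r$ rounds, where $T_r\le T$ is the (random) number of times stage $r$ is reached. The assumed black-box guarantee then bounds this pointwise by $R_{T_r,\bP_r}$, and monotonicity in the horizon --- which holds for all standard MAB bounds (e.g., UCB, Thompson sampling) --- gives $R_{T_r,\bP_r}\le R_{T,\bP_r}$. Summing over $r$ completes the proof.

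The main subtlety I anticipate is the interaction between $\cA_r$ and the rest of the algorithm: one must justify both that idle rounds do not alter $\cA_r$'s behavior (immediate from the update rule) and that the MAB bound may legitimately be applied with a random, data-dependent horizon $T_r$ (handled by the pointwise monotonicity step above, which sidesteps any need to condition on $T_r$ before invoking the black-box guarantee). The remaining steps are purely algebraic or follow from standard properties of conditional expectation.
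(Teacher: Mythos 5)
Your proof is correct and follows the same overall strategy as the paper: run one black-box bandit per stage, reduce the multiplicative regret to a sum of per-stage additive regrets, and invoke the per-stage guarantee. The one genuine difference is in the middle step. The paper's \cref{lem:prod-to-sum} drops the prefix factor entirely and bounds the round-$t$ regret by $\sum_s (p_{s,i^*_s}-p_{s,i_{t,s}})$, then deals with the fact that $\textsc{Bandit}_s$ is only activated when stage $s$ is reached by ``imagining'' the bandit continues until it has performed $T$ actions (legitimate since the extra gap terms are nonnegative). Your telescoping keeps the factor $\prod_{s<r}p_{s,i_{t,s}}$, recognizes it as the probability that stage $r$ is reached, and thereby charges each stage-$r$ gap only on rounds where $\cA_r$ is actually activated --- a sharper intermediate inequality that makes the accounting align exactly with the bandit's true activation count. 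The only imprecision is your claim that the black-box guarantee bounds the realized pseudo-regret ``pointwise by $R_{T_r,\bP_r}$'': the guarantee is on the \emph{expected} regret at a fixed horizon, so it cannot be applied pointwise at a random, trajectory-dependent $T_r$. The clean fix is exactly the paper's coupling: extend $\cA_r$'s run to a full $T$ activations, note that the realized pseudo-regret over the first $T_r$ activations is pointwise at most that over all $T$ (gaps are nonnegative), and only then take expectations and apply the guarantee at the deterministic horizon $T$. With that substitution your argument is complete and, if anything, slightly tighter than the paper's.
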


The crux of our analysis relies on the next lemma which will allow us to bound the regret generated in one round of a SFIPP instance (\cref{eq:SFIPP-benchmark-objective}) by the sum of the regret encountered at each stage when interpreting each stage as an individual Bernoulli bandit.

\begin{restatable}{lemma}{prodtosum}
\label{lem:prod-to-sum}
Let $a_1, \ldots, a_m, b_1, \ldots, b_m \in [0,1]$ such that $0 \leq b_i \leq a_i \leq 1$ for all $i \in [m]$.
Then,
\[
0
\leq \left( \prod_{i=1}^m a_i \right) - \left( \prod_{i=1}^m b_i \right)
\leq \sum_{i=1}^m (a_i - b_i)
\]
\end{restatable}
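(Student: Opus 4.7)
The lower bound is immediate: since $0 \le b_i \le a_i$ for every $i$, the nonnegativity of all factors gives $\prod b_i \le \prod a_i$. So the real content is the upper bound, and I would prove it by a telescoping identity (equivalently, a one-line induction).

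The plan is to write the difference of the two products as a telescoping sum where, at the $j$-th term, the first $j-1$ factors are $b$'s and the last $m-j$ factors are $a$'s, swapping $a_j$ to $b_j$ at the joint. Concretely,
\[
\prod_{i=1}^m a_i - \prod_{i=1}^m b_i
= \sum_{j=1}^m \left(\prod_{i=1}^{j-1} b_i\right)(a_j - b_j)\left(\prod_{i=j+1}^m a_i\right),
\]
which is verified by observing that each successive pair of consecutive terms cancels, leaving only $\prod a_i$ at the top and $\prod b_i$ at the bottom (with the usual convention that an empty product equals $1$).

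Given this identity, I would finish by bounding each summand. Both $\prod_{i=1}^{j-1} b_i$ and $\prod_{i=j+1}^m a_i$ lie in $[0,1]$ because all factors are in $[0,1]$, and $a_j - b_j \ge 0$ by hypothesis; hence each summand is at most $a_j - b_j$. Summing over $j$ gives $\sum_{j=1}^m(a_j - b_j)$, which is the desired upper bound.

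There is no real obstacle here: the only point to be careful about is invoking the sign hypotheses in two places (nonnegativity of $a_j - b_j$ to justify dropping the bracketing products, and membership in $[0,1]$ to bound those products by $1$). If one prefers, the same bound follows by induction on $m$: for the step, write $A = \prod_{i<m} a_i$, $B = \prod_{i<m} b_i$, note $A a_m - B b_m = A(a_m - b_m) + b_m(A - B) \le (a_m - b_m) + (A - B)$ using $A, b_m \in [0,1]$, and apply the inductive hypothesis to $A - B$.
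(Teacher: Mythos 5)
Your proof is correct and is essentially the same argument as the paper's: the telescoping identity $\prod a_i - \prod b_i = \sum_j \bigl(\prod_{i<j} b_i\bigr)(a_j-b_j)\bigl(\prod_{i>j} a_i\bigr)$ is just the unrolled form of the paper's induction on $m$, and the inductive variant you sketch at the end matches the paper's proof almost verbatim (the paper factors the step as $(A-B)a_m + B(a_m-b_m)$ rather than $A(a_m-b_m)+b_m(A-B)$, an immaterial difference). Both routes use the same two facts in the same places, namely $a_j - b_j \ge 0$ and that partial products lie in $[0,1]$.
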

\begin{proof}[Proof sketch]
Perform induction on $m$.
\end{proof}

By applying \cref{lem:prod-to-sum}, any existing classic bandit algorithm can be repeatedly employed at each stage independently in a black-box manner, resulting in a linear multiplicative increase in the regret bound.
We refer to the resulting algorithm as \textsc{StagedBandit} (\Cref{alg:stagedbandit}).

\begin{algorithm}[t]
\begin{algorithmic}[1]
\caption{The \textsc{StagedBandit} algorithm.}
\label{alg:stagedbandit}
    \Statex \textbf{Input}: Number of rounds $T$, number of stages $m$, number of actions $k$, a classic MAB bandit algorithm \textsc{Bandit}
    \State Initialize $m$ \textsc{Bandit} instances $\textsc{Bandit}_1$, $\ldots$, $\textsc{Bandit}_m$, one for each stage
    \For{round $t = 1, \ldots, T$}
        \For{stage $s = 1, \ldots, m$}
            \State Query $\textsc{Bandit}_s$ for an action $i \in [k]$
            \State Use action $i \in [k]$ for stage $s$ in round $t$
            \If{stage $s$ succeeded}
                \State Provide $\textsc{Bandit}_s$ with a reward of 1
            \Else
                \State Provide $\textsc{Bandit}_s$ with a reward of 0
                \State \textbf{break}
                \Comment{Do not perform stages $s+1, \ldots, m$}
            \EndIf
        \EndFor
    \EndFor
\end{algorithmic}
\end{algorithm}

\begin{proof}[Proof of \cref{thm:staged-bandit-guarantees}]
Consider the \textsc{StagedBandit} algorithm (\cref{alg:stagedbandit}) which runs $m$ instances of a MAB algorithm \textsc{Bandit}, one $\textsc{Bandit}_s$ for each stage $s\in [m]$. $\textsc{Bandit}_s$  achieves an instance-dependent regret of $O(R_{T,\bP_s})$ over $T$ rounds on the success probabilities $\bP_s = (p_{s,1}, \ldots, p_{s,k})$ at stage $s$. We can bound the regret of \textsc{StagedBandit} as:
\begin{align*}
&\; \mathbb{E} \left[ R(\textsc{StagedBandit}) \right]\\
= &\; \mathbb{E} \left[ \sum_{t=1}^T \left[ \left( \prod_{s=1}^m p_{s,i^*_s} \right) - \left( \prod_{s=1}^m p_{s,i_{t,s}} \right) \right] \right] \tag{By definition of \cref{eq:SFIPP-benchmark-objective}}\\
\leq &\; \mathbb{E} \left[ \sum_{t=1}^T \sum_{s=1}^m \left( p_{s,i^*_s} - p_{s,i_{t,s}} \right) \right] \tag{By \cref{lem:prod-to-sum} since always $p_{s,i^*_s} \geq p_{s,i_{t,s}}$}\\
= &\; \sum_{s=1}^m \mathbb{E} \left[ \sum_{t=1}^T \left( p_{s,i^*_s} - p_{s,i_{t,s}} \right) \right] \tag{Linearity of expectation}\\
\in &\; O(\sum_{s=1}^m R_{T,\bP_s}) \tag{Since each $\textsc{Bandit}_s$ incurs a regret of $O(R_{T,\bP_s})$}
\end{align*}
Note that for some round $t\in [T]$, not all values of $(i_{t,s})_{s\in [m]}$ may actually be instantiated (since we stop picking actions if we failed at some stage). 
However, one can imagine running $\textsc{Bandit}_s$ until we have actually performed $T$ actions at stage $m$.
Since we are only proving an upper bound, these imaginary regret counts from the later stages do not hurt us in the analysis.
\end{proof}

Note that the inequality of \cref{lem:prod-to-sum} is asymptotically tight when $a_1, \ldots, a_m = 1$ and $b_1, \ldots, b_m = 1 - \eps$ for some $\eps \in o(\frac{1}{m})$.
To see this, consider the Taylor expansion of $(1 - \eps)^m$: $(1 - \eps)^m = 1 - m \eps + \frac{m(m-1)}{2} \eps^2 - O((m \eps)^3) \leq 1 - \frac{m \eps}{2}$ for $\eps \in o(\frac{1}{m})$.
By rearranging, we see that $\left( \prod_{i=1}^m a_i \right) - \left( \prod_{i=1}^m b_i \right) = 1 - (1 - \eps)^m \geq \frac{m \eps}{2} = \frac{1}{2} \sum_{i=1}^m \eps = \frac{1}{2} \sum_{i=1}^m (a_i - b_i)$. Thus our bound from \Cref{thm:staged-bandit-guarantees} is asymptotically tight as well. 

\section{Improved Regret with Known Stage Types}
\label{sec:known-stage-types}

Given additional knowledge about which stages are known to be of the same type, we can modify the algorithm in \cref{sec:probabilistic} to obtain stronger results.
Two stages are said to be of the same type if they share the same optimal action; a special case of this is when there are two identical stages resulting in two identical rows in $\bP$.

We will sketch two ways in which \textsc{StagedBandit} can be modified to ``merge'' stages of the same type to provably attain lower regret and defer pseudocode and theoretical guarantees to \cref{sec:appendix-known-stage-types}.
Our first ``collapsing'' idea is to ``collapse'' all stages of the same type into one single ``meta'' stage, which means that we will always perform the same action throughout all stages of the same type and will only track whether we made it through \emph{all} of the stages of the type.
By doing so, instead of needing to learn the success probability of an action at each stage, we learn 
the success probability of a collapsed action as the product of the success probability across the collapsed stages.
As an example, consider the following SFIPP matrix
\[
\bP =
\begin{bmatrix}
0.9 & 0.8\\
0.6 & 0.5\\
0.5 & 0.7
\end{bmatrix}
\]
We see that the optimal sequence is $(1,1,2)$.
Suppose we now collapse the first two rows of $\bP$ into $\bP'$:
\[
\bP'
=
\begin{bmatrix}
0.9 \cdot 0.6 & 0.8 \cdot 0.5\\
0.5 & 0.7
\end{bmatrix}
=
\begin{bmatrix}
0.54 & 0.4\\
0.5 & 0.7
\end{bmatrix}
\]
In this collapsed matrix $\bP'$, the optimal action sequence is $(1,2)$ which corresponds  to performing $(1,1,2)$ in $\bP$ (as we  perform the same action for all stages of the same type).
As we only collapse rows that have the same optimal action index, optimally solving the collapsed SFIPP problem will recover the optimal action sequence of the original SFIPP.
Furthermore, notice that the probability gap between the best action for the first two stages has widened from $0.9 - 0.8 = 0.1$ and $0.6 - 0.5 = 0.1$ to $0.54 - 0.4 = 0.14$.
The increase in probability gap is provably helpful in reducing incurred regret in bandit algorithms: there are well-known instance-dependent regret bounds for bandit algorithms which inversely relate the expected regret incurred by an algorithm to the success probability gaps between the optimal and suboptimal arms, with smaller gaps requiring more exploration to identify the best arm.
For instance, given a bandit instance with arm success probabilities $p_1 \geq \ldots \geq p_k$, it is known that UCB achieves expected instance-dependent regret of $O(\log (T) \cdot \sum_{i=2}^k \frac{1}{p_1 - p_i})$ \cite{bubeck2012regret,slivkins2019introduction,lattimore2020bandit}.
Thus, collapsing stages as shown above will provably reduce our incurred regret, as supported by our empirical experiments in \cref{sec:experiments}.

We turn this intuition into the \textsc{StagedCollapsedBandit} algorithm, which creates one $\textsc{Bandit}_j$ instance for each stage type $j$. 
In each round, the algorithm queries $\textsc{Bandit}_j$ once and chooses the returned action for each stage of type $j$.
If we pass \emph{all} stages of some type $j$, we give a reward of $1$ to $\textsc{Bandit}_j$.
If we fail for the first time at some stage of type $j$, we give a reward of $0$ to $\textsc{Bandit}_j$. 
Thus, each $\textsc{Bandit}_j$ is effectively acting on the product of success probabilities.

We also present a second adaptation of \textsc{StagedBandit} called \textsc{StagedCollapsedFineGrainedBandit}. 
Again, we create one $\textsc{Bandit}_j$ instance for each stage type $j$.
However now, for each stage, we query the corresponding $\textsc{Bandit}$ and feed back whether the selected action was successful in the stage. 

In \cref{sec:experiments}, we empirically show that neither algorithm dominates the other.
To understand this behavior in theory, let $f: [m] \to [\ell]$ be a mapping that maps each stage to one of $\ell$ types and let us consider the special case where all stages of the same type have \emph{identical} success probabilities.
Let us denote the action probabilities of stage type $j \in [\ell]$ by $\bQ_j = (q_1, \ldots, q_k)$ and its $r^{th}$ power by $\bQ^r_j = (q^r_1, \ldots, q^r_k)$.
Now, suppose we use the UCB algorithm for the $\textsc{Bandit}$s in both algorithms, where we again write $O(R_{T,{p_1,\ldots,p_k}})$ to denote the UCB regret.
Our analysis in \cref{sec:appendix-known-stage-types} implies that \textsc{StagedCollapsedBandit} would incur an expected regret of at most $O(\sum_{j=1}^\ell R_{T, \bQ^r_j})$ while \textsc{StagedCollapsedFineGrainedBandit} incurs an expected regret of at most $O(\sum_{j=1}^\ell R_{T \cdot |f^{-1}(j)|, \bQ_j})$.
Recalling that $O(R_{T,{p_1,\ldots,p_k}})\in O(\log (T) \cdot \sum_{i=2}^k \frac{1}{p_1 - p_i})$, these bounds suggest that \textsc{StagedCollapsedFineGrainedBandit} will perform advantageous when there are large differences between the success probabilities of actions for an individual stage, whereas \textsc{StagedCollapsedBandit} is favorable when multiplying  success probabilities of an action across stages of the same type leads to increased gaps. 

\section{Empirical Evaluation}
\label{sec:experiments}

While \textsc{StagedBandit} (\cref{alg:stagedbandit}) can be used for all SFIPP instances, we show in the experiments that depending on the structure of an SFIPP instance, our customized algorithms perform significantly better.
Thus, our proposed algorithms together form a toolbox that produces high-quality solutions applicable to a wide range of scenarios.

In the rest of this section, we will refer to our algorithm
\textsc{UniformThenFixed} as UTF, \textsc{StagedBandit} as SB, \textsc{StagedCollapsedBandit} as SCB, and \textsc{StagedCollapsedFineGrainedBandit} as SCFGB.
For experiments involving SCB and SCFGB, we either treat all stages as a single stage type (SCB\_1 and SCFBG\_1) or every alternate stage as the same stage type (SCB\_2 and SCFBG\_2).
Finally, in our implementations of SB, SCB, and SCFBG, we use the classic \textsc{UCB} algorithm \cite{auer2002finite} as our single-stage bandit.
For reproducibility, our source code and experimental scripts are provided in the supplementary materials.

\begin{figure*}[htb]
\centering
\begin{subfigure}[t]{0.32\linewidth}
    \centering
    \includegraphics[width=\linewidth]{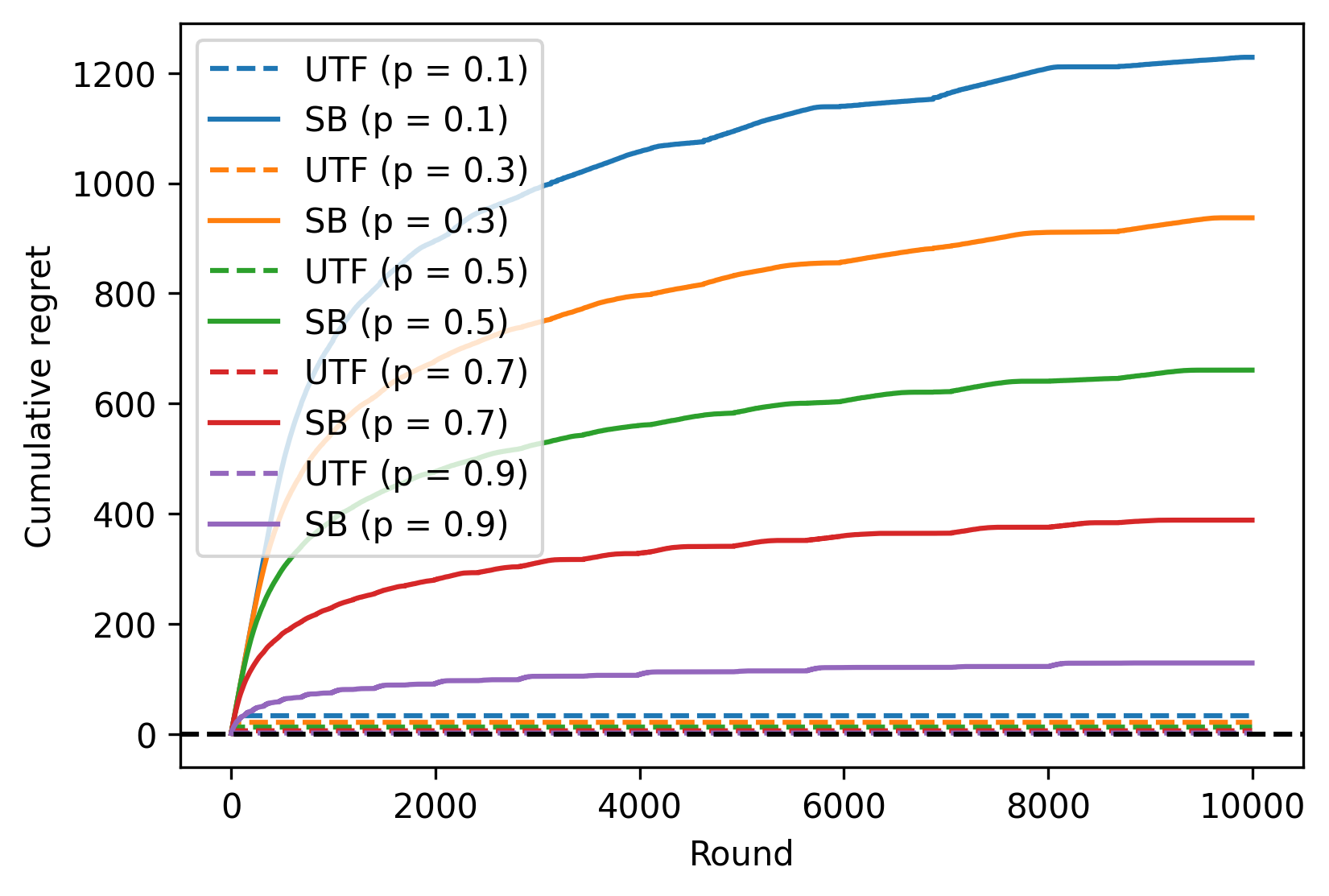}
    \caption{Deterministic processes}
\end{subfigure}
\hfill
\begin{subfigure}[t]{0.32\linewidth}
    \centering
    \includegraphics[width=\linewidth]{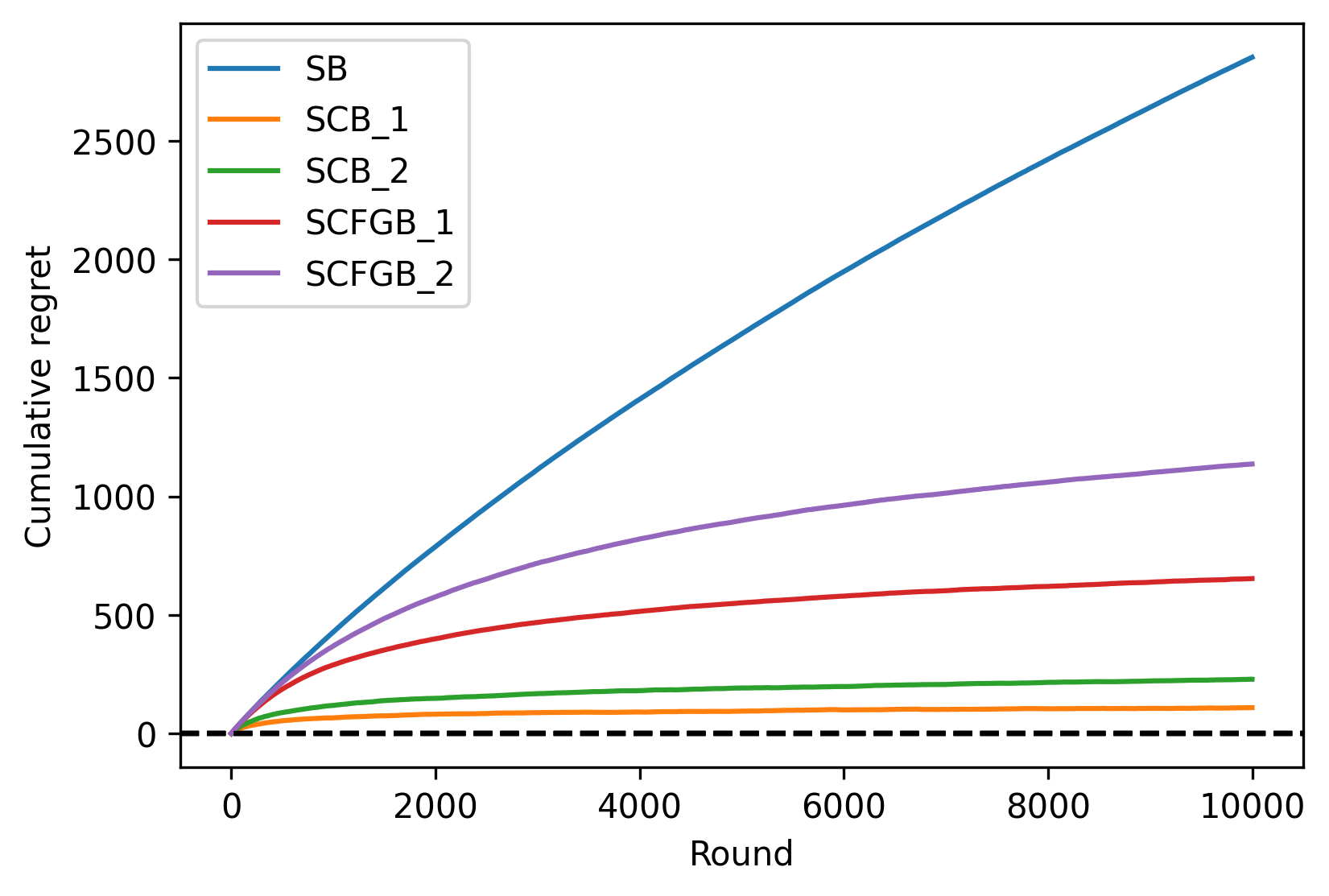}
    \caption{Probabilistic processes generated using $\mathrm{Beta}(\alpha = 10, \beta = 1)$  with one stage type.}
\end{subfigure}
\hfill
\begin{subfigure}[t]{0.32\linewidth}
    \centering
    \includegraphics[width=\linewidth]{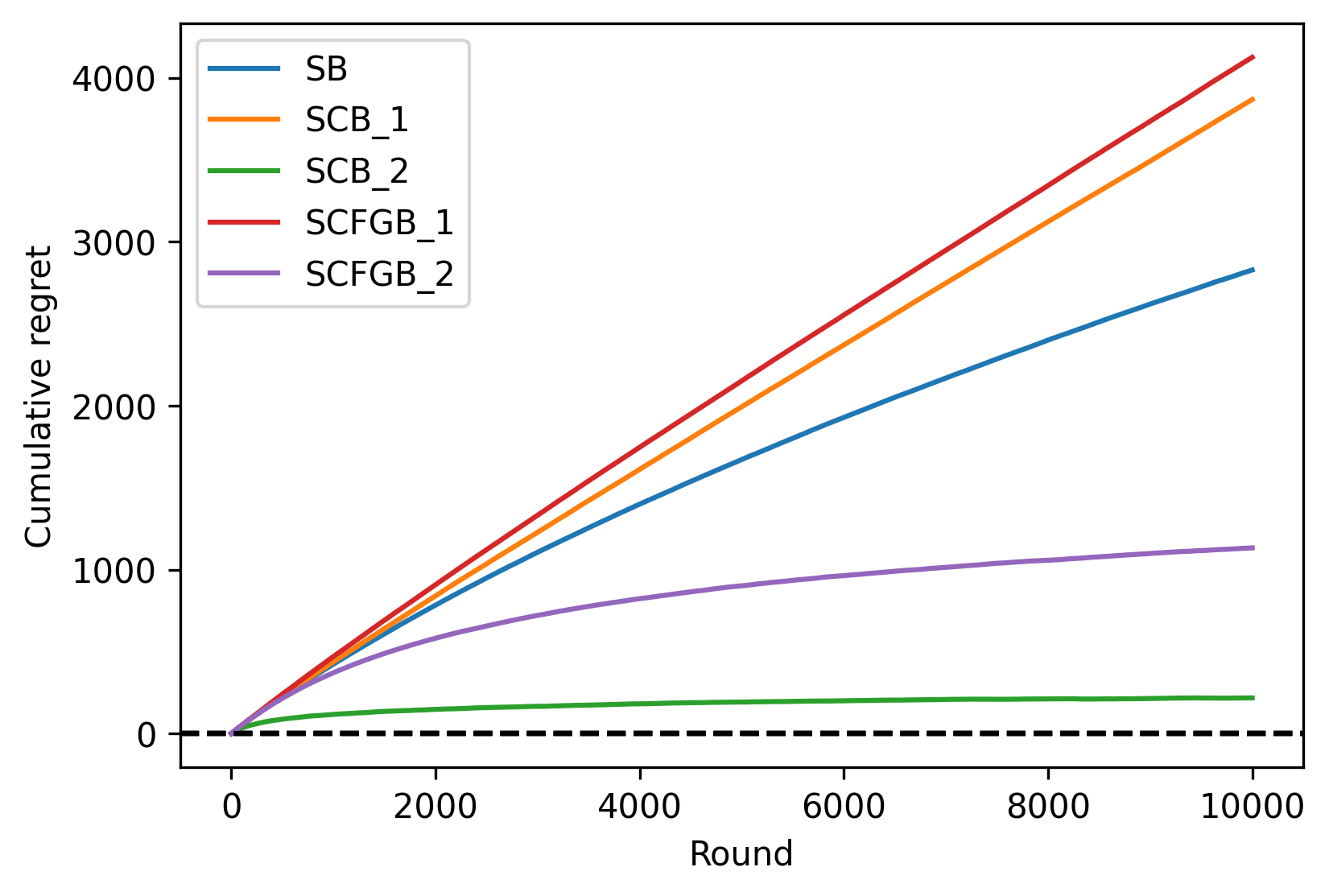}
    \caption{Probabilistic processes generated using $\mathrm{Beta}(\alpha = 10, \beta = 1)$ with two different alternating stage types.}
\end{subfigure}
\vspace{10pt}
\begin{subfigure}[t]{0.32\linewidth}
    \centering
    \includegraphics[width=\linewidth]{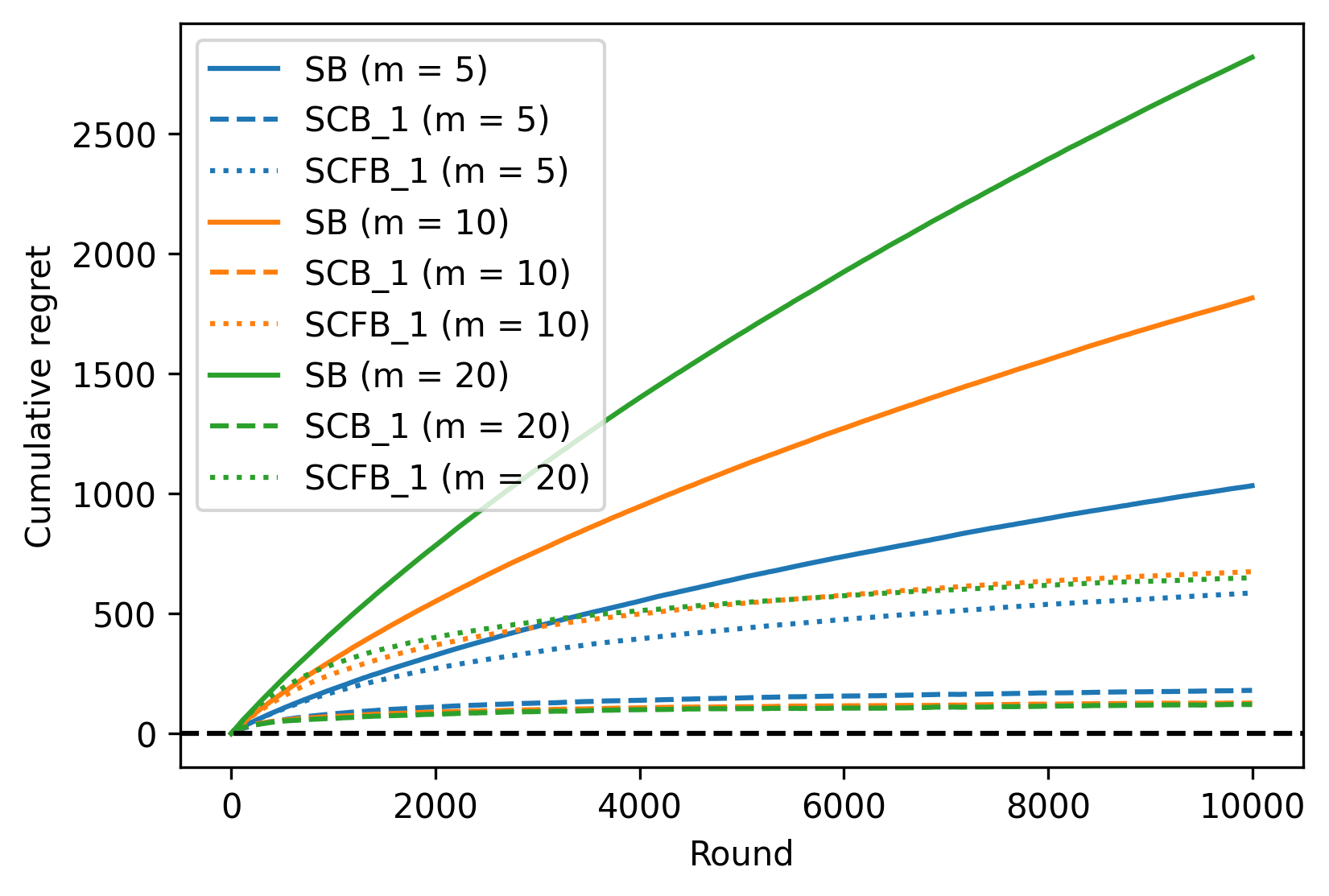}
    \caption{Probabilistic processes generated using
     $\mathrm{Beta}(\alpha = 10, \beta = 1)$ with one stage type, but with varying number of stages.}
\end{subfigure}
\hfill
\begin{subfigure}[t]{0.32\linewidth}
    \centering
    \includegraphics[width=\linewidth]{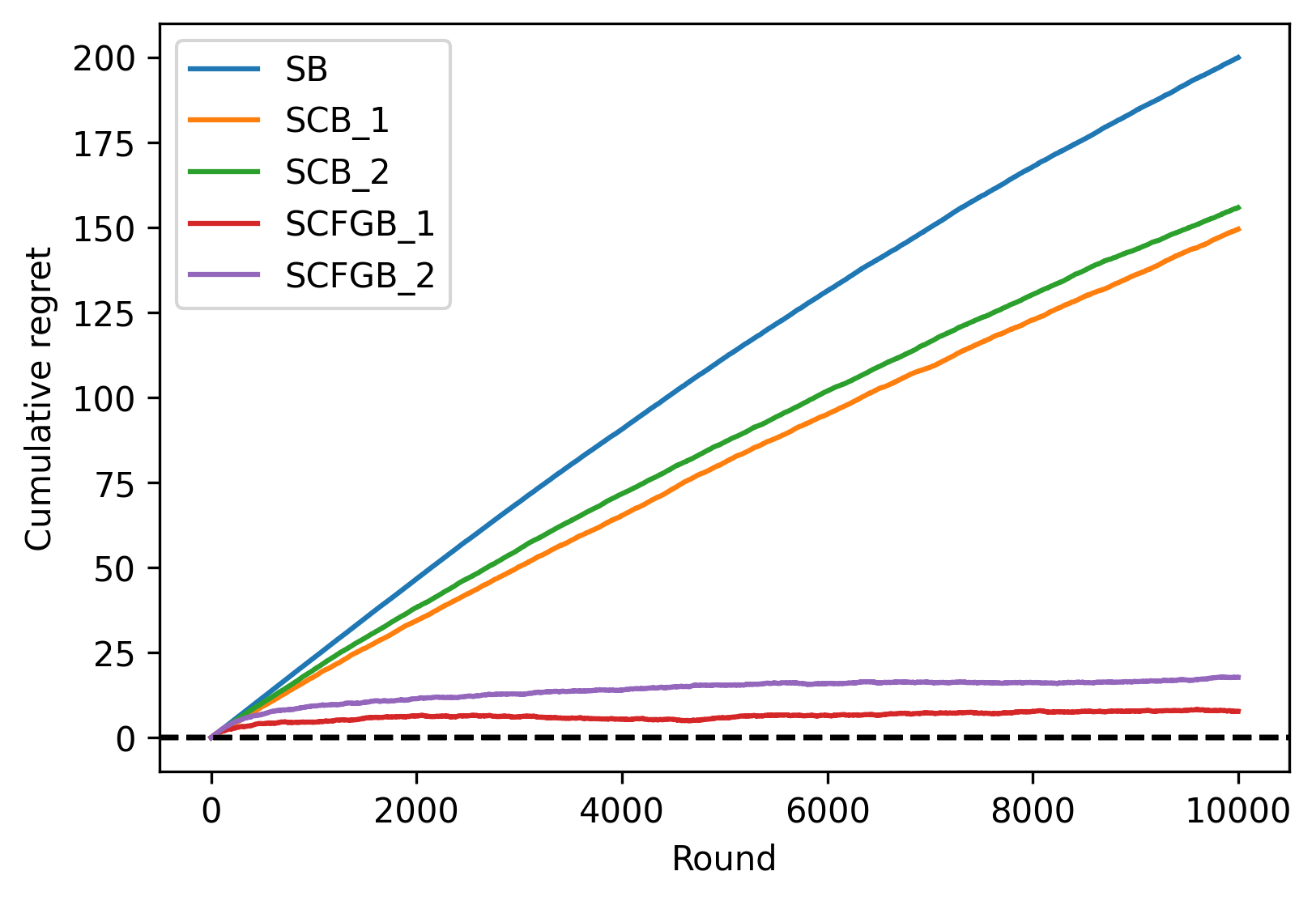}
    \caption{Probabilistic processes generated using $\mathrm{Beta}(\alpha = 1, \beta = 1)$ with one stage type.}
\end{subfigure}
\hfill
\begin{subfigure}[t]{0.32\linewidth}
    \centering
    \includegraphics[width=\linewidth]{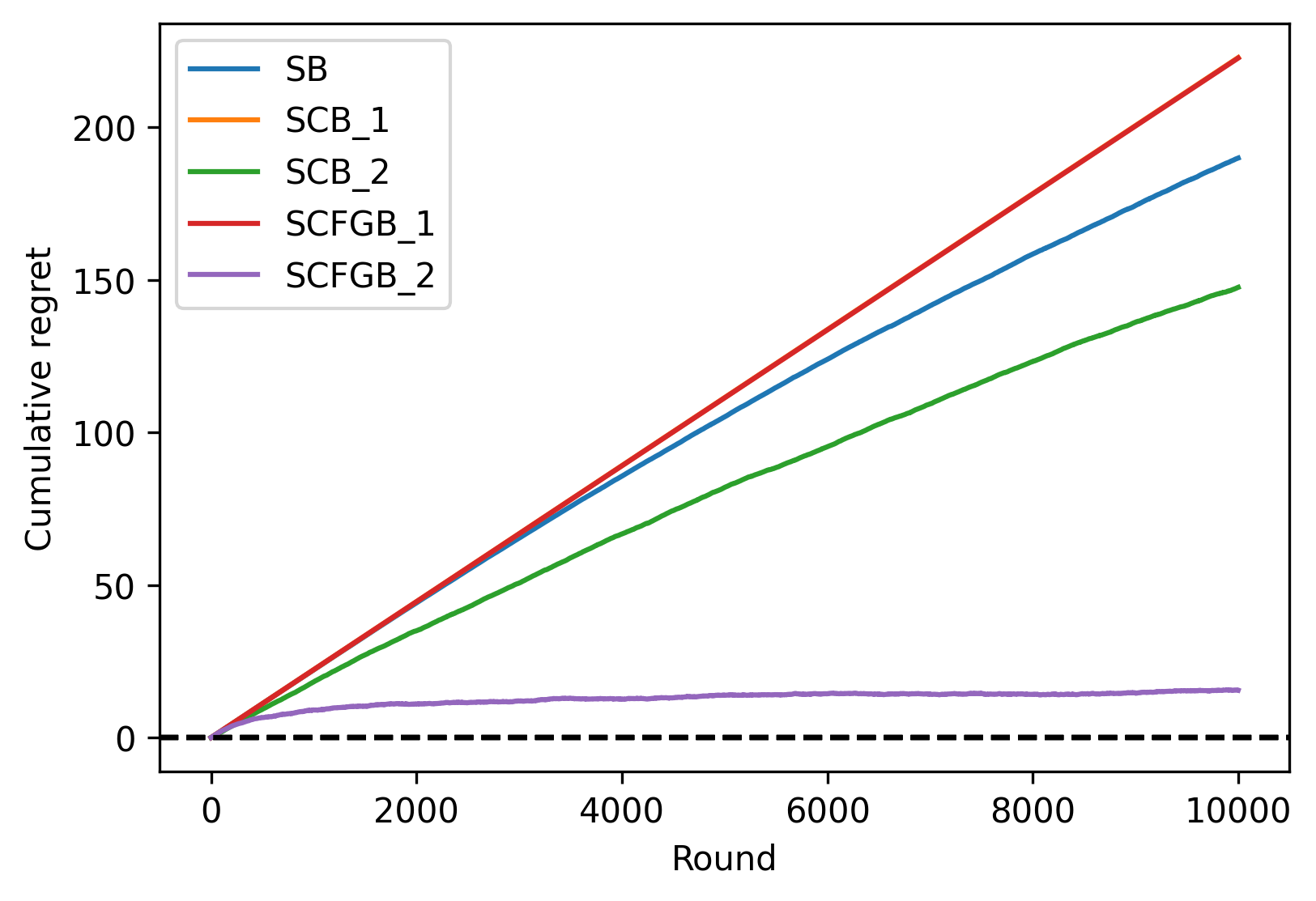}
    \caption{Probabilistic processes generated using $\mathrm{Beta}(\alpha = 1, \beta = 1)$
    with two different alternating stage types}
\end{subfigure}
\vspace{-0.5cm}
\caption{All experimental plots for \cref{sec:experiments}.
We provide error bar versions of them in \cref{sec:appendix-plots}.
}
\label{fig:all-experimental-plots}
\end{figure*}

\subsection{Power of Simplicity for Deterministic SFIPP}
We first look at the deterministic setting, where we will observe that our specialized algorithm UTF outperforms the more general SB algorithm.
For this, we constructed $100$ deterministic SFIPP instances with $m=20$ stages and $k=5$ actions across a horizon of $T = 10,000$.
We independently set each entry of $\bP$ to $\Pone$ with some fixed probability $p$.
We set one randomly selected index per stage to $\Pone$, thus ensuring that each stage contains at least one successful action.

\cref{fig:all-experimental-plots}(a) depicts the
average regret we obtained for the two algorithms and for~$p \in \{0.1, 0.3, 0.5, 0.7, 0.9\}$.
Each pair of colored curves compares the cumulative regret between UTF and SCB for some value of $p$.
The large gaps between same-colored curves indicate the decisive superiority of UTF due
to excessive exploration of unsuccessful actions by the general bandit approach SCB, as it does not incorporate that the setting is deterministic.

\subsection{More Stages Brings More Collapsing Gains}
\label{sec:valid-collapsing}

We empirically validate our theory from \cref{sec:known-stage-types} that ``collapsing'' multiple stages of the same time helps drastically as the number of stages $m$ increases.
We generated $100$ probabilistic SFIPP instances with $m=\{5,10,20\}$ stages and $k=5$ actions.
Each entry of $\bP$ is generated i.i.d.\ using the Beta distribution with parameters $\alpha = 10$ and $\beta = 1$ resulting in large success probabilities.
By relabeling the actions, we ensured that in each generated instance, the same single action is optimal for \emph{every} stage; see \cref{sec:appendix-staged-generation} for details.

We show the averaged regret values incurred by SB, SCB\_1, and SCFGB\_1 for a horizon of $T = 10,000$ in \cref{fig:all-experimental-plots}(b).
The curves for SCB\_1 and SCFGB\_1 flatten out much quicker as compared to SB, for all values of $m$.
This is because SCB\_1 and SCFGB\_1 can quickly learn to perform well in later stages by observing the failures and successes incurred at earlier stages, while SB essentially treats all stages as fully independent, even though the optimal action index is the same across all stages.

\subsection{Any Valid Collapsing Helps Reduce Regret}

We now take a closer look at the comparison between our two customized algorithms SCB and SCFGB for the case with known stage types.
We prepared $100$ probabilistic SFIPP instances with $m=20$ stages and $k=5$ actions across a horizon of $T = 10,000$.
Each entry of $\bP$ is generated i.i.d.\ using the Beta distribution, either with parameters $\alpha = 10$ and $\beta = 1$ to encourage large success probabilities or with parameters $\alpha = 1$ and $\beta = 1$ to sample probability values from the uniform distribution.
By relabeling the actions if necessary, we ensured that in each generated instance, the same single action is optimal for \emph{every} stage; see \cref{sec:appendix-staged-generation} for details.

\cref{fig:all-experimental-plots}(c,d) illustrate the average results for three algorithms SB, SCB, and SCFGB on these instances; (c) shows results for instances generated with parameters $\alpha = 10$ and $\beta = 1$, while (d) shows results for those generated with parameters $\alpha = 1$ and $\beta = 1$.
It is clear that both collapsed bandits SCB and SCFGB obtain much lower expected regret compared to SB.
There are two further interesting observations to point out here.
Firstly, despite not fully collapsing all stages into a single stage type, the curves of SCB\_2 and SCFGB\_2 show that a valid partial collapsing into two stage types still helps to reduce expected regret as compared to SB.
Secondly, as discussed in \cref{sec:known-stage-types}, neither SCB nor SCFGB always dominates the other: SCB beats SCFGB in instances generated with Beta distribution parameters $\alpha = 10$ and $\beta = 1$ (which generally leads to instances with high success probabilities where differences of products of success probabilities are larger than the differences in the original probabilities) while SCFGB beats SCB in instances generated with Beta distribution parameters $\alpha = 1$ and $\beta = 1$ (where we expect larger gaps in the success probabilities at each stage).

\subsection{Invalid Collapsing Increases Regret}

We demonstrate a strong detrimental effect of unbounded regret
resulting from collapsing stages that are
of different type.
To illustrate this effect on an example first, suppose we collapse the last two rows of $\bP$ from the beginning of \Cref{sec:known-stage-types} into $\bP''$:
\[
\bP''
=
\begin{bmatrix}
0.9 & 0.8\\
0.6 \cdot 0.5 & 0.5 \cdot 0.7
\end{bmatrix}
=
\begin{bmatrix}
0.9 & 0.8\\
0.3 & 0.35
\end{bmatrix}.
\]
In the collapsed matrix $\bP''$, the optimal action sequence $(1,2)$ corresponds to performing $(1,2,2)$ in $\bP$, which is suboptimal and will always incur regret even as $T \to \infty$.

Turning to our experiments, as before, we construct $100$ probabilistic SFIPP instances with $m=20$ stages and $k=5$ actions across a horizon of $T = 10,000$.
Each entry of $\bP$ is generated i.i.d.\ using the Beta distribution, either with parameters $\alpha = 10$ and $\beta = 1$ to encourage large success probabilities or with parameters $\alpha = 1$ and $\beta = 1$ to sample probability values from the uniform distribution.
By relabeling the actions if necessary, we ensured that in each generated instance, odd-numbered and even-numbered stages are of the same stage type, i.e., all odd-numbered stages share a common optimal action, as do all even-numbered stages, but the optimal actions for odd and even stages are distinct; see \cref{sec:appendix-staged-generation} for details.

In \cref{fig:all-experimental-plots}(e,f), we see that SCB\_1 and SCFGB\_1 incur ever-increasing regret due to misalignment of stage type information; (e) shows results for instances generated with parameters $\alpha = 10$ and $\beta = 1$ while (f) shows results for those generated with parameters $\alpha = 1$ and $\beta = 1$.
Since they combined both odd and even type stages into a single one, SCB\_1 and SCFGB\_1 would \emph{never} be able to learn and choose the true optimal action across all $m$ stages.
Meanwhile, we see that SCB\_2 and SCFGB\_2 outperform SB as expected, due to a similar reasoning as in \cref{sec:valid-collapsing}.

\section{Conclusions}
\label{sec:conclusion}

In this work, we formalized a novel sequential planning problem called SFIPP and proposed algorithms for solving it under various settings.
We supplemented our theoretical guarantees with empirical evaluation, highlighting interesting trade-offs between the algorithms depending on the \emph{unknown} underlying success probabilities.

In future work, it would be interesting to consider different generalizations of our new model, for instance, the case where a different set of actions is available at different timesteps or stages, thereby arriving at a generalization of sleeping multi-armed bandits \cite{kleinberg2010regret}.

\section*{Acknowledgements}
This work was supported by the Office of Naval Research (ONR) under Grant Number N00014-23-1-2802.
The views and conclusions contained in this document are those of the authors and should not be interpreted
as necessarily representing the official policies, either expressed or implied, of the Office of Naval Research
or the U.S. Government. AK acknowledges support, in part, from the European Research Council (ERC) under the European Union’s Horizon 2020 research and innovation programme (grant agreement No 101002854).

\begin{figure}[H]
\centering
\includegraphics[width=3cm]{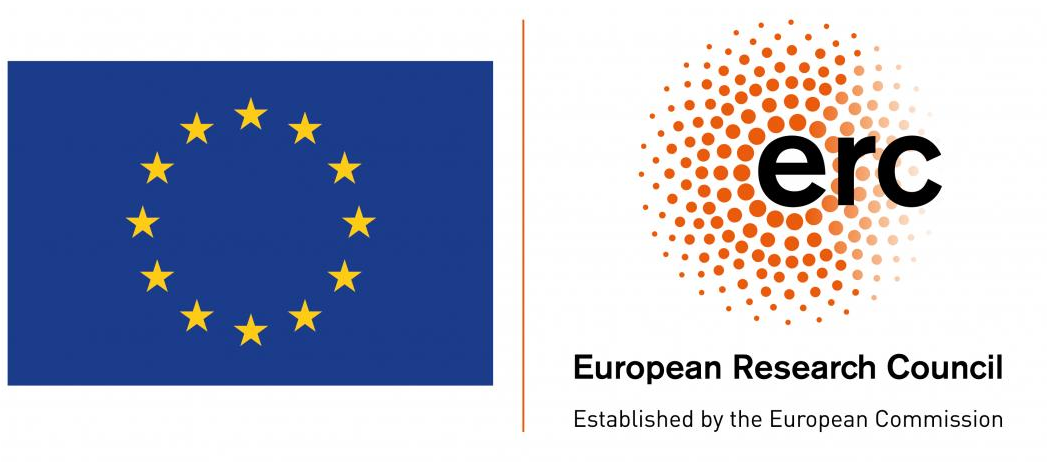}
\end{figure}

\bibliographystyle{alpha}
\bibliography{ref}

\appendix

\section{Details for Improving Regret with Known Stage Types}
\label{sec:appendix-known-stage-types}

In this section, we give the pseudocode of our two collapsed bandit algorithms \textsc{StagedCollapsedBandit} (\cref{alg:stagedcollapsedbandit}) and \textsc{StagedCollapsedFineGrainedBandit} (\cref{alg:stagedcollapsedfinegrainedbandit}), and their accompanying theoretical guarantees in \cref{thm:staged-collapsed-bandit-guarantees} and \cref{thm:staged-collapsed-fine-grained-bandit-guarantees} respectively.

\begin{algorithm}[htb]
\begin{algorithmic}[1]
\caption{The \textsc{StagedCollapsedBandit} algorithm.}
\label{alg:stagedcollapsedbandit}
    \Statex \textbf{Input}: Number of rounds $T$, number of stages $m$, number of actions $k$, a classic single-stage bandit algorithm \textsc{Bandit}, stage type function $f: [m] \to [\ell]$
    \State Let $\ell$ be the number of stage types, i.e.\ $\ell = \max_{s \in m} f(s)$
    \State Initialize $\ell$ \textsc{Bandit} instances $\textsc{Bandit}_1$, $\ldots$, $\textsc{Bandit}_\ell$, one for each stage type
    \For{round $t = 1, \ldots, T$}
        \For{stage $s = 1, \ldots, m$}
            \If{this is the first instance of type $f(s)$ stage}
                \State Query $\textsc{Bandit}_{f(s)}$ for an action $i_{f(s)} \in [k]$
            \EndIf
            \State Use action $i_{f(s)} \in [k]$ for stage $s$ in round $t$
            \If{stage $s$ succeeded and $s = m$}
                \State Provide all bandits $\textsc{Bandit}_1, \ldots, \textsc{Bandit}_\ell$ with a reward of 1
            \Else \Comment{stage $s$ failed or $s < m$}
                \If{stage $s$ failed}
                    \State Provide $\textsc{Bandit}_{f(s)}$ with a reward of 0
                    \State Provide reward of 1 to any \emph{other} bandits that have completed all stages
                \EndIf
                \State\hspace{\algorithmicindent}\textbf{break}
                \Comment{Do not perform future stages}
            \EndIf
        \EndFor
    \EndFor
\end{algorithmic}
\end{algorithm}

\begin{algorithm}[htb]
\begin{algorithmic}[1]
\caption{The \textsc{StagedCollapsedFineGrainedBandit} algorithm.}
\label{alg:stagedcollapsedfinegrainedbandit}
    \Statex \textbf{Input}: Number of rounds $T$, number of stages $m$, number of actions $k$, a classic single-stage bandit algorithm \textsc{Bandit}, stage type function $f: [m] \to [\ell]$
    \State Let $\ell$ be the number of stage types, i.e.\ $\ell = \max_{s \in m} f(s)$
    \State Initialize $\ell$ \textsc{Bandit} instances $\textsc{Bandit}_1$, $\ldots$, $\textsc{Bandit}_\ell$, one for each stage type
    \For{round $t = 1, \ldots, T$}
        \For{stage $s = 1, \ldots, m$}
            \State Query $\textsc{Bandit}_{f(s)}$ for an action $i_{f(s)} \in [k]$
            \State Use action $i_{f(s)} \in [k]$ for stage $s$ in round $t$
            \If{stage $s$ succeeded}
                \State Provide $\textsc{Bandit}_{f(s)}$ with a reward of $1$
            \Else
                \State Provide $\textsc{Bandit}_{f(s)}$ with a reward of $0$
                \State \textbf{break}
                \Comment{Do not perform stages $s+1, \ldots, m$}
            \EndIf
        \EndFor
    \EndFor
\end{algorithmic}
\end{algorithm}

The analysis of \textsc{StagedCollapsedFineGrainedBandit} relies \cref{lem:prod-to-sum} while the analysis of \textsc{StagedCollapsedBandit} relies on a generalization of \cref{lem:prod-to-sum} where we also account for the stage types.

\begin{restatable}{lemma}{prodtosumtypes}
\label{lem:prod-to-sum-types}
Let $a_1, \ldots, a_m, b_1, \ldots, b_m \in [0,1]$ such that $0 \leq b_i \leq a_i \leq 1$ for all $i \in [m]$.
Suppose there is a function $f: [m] \to [\ell]$ that maps stages into one of $\ell$ types.
Then,
\[
0
\leq \left( \prod_{i=1}^m a_i \right) - \left( \prod_{i=1}^m b_i \right)\\
\leq \sum_{j=1}^\ell \left( \left( \prod_{i \in [m] : f(i) = j} a_i \right) - \left( \prod_{i \in [m] : f(i) = j} b_i \right) \right)
\]
\end{restatable}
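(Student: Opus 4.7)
The plan is to reduce Lemma~\ref{lem:prod-to-sum-types} directly to Lemma~\ref{lem:prod-to-sum} by grouping the factors according to the type function $f$. Concretely, for each type $j \in [\ell]$, I define the aggregated quantities
\[
A_j := \prod_{i \in [m] : f(i) = j} a_i, \qquad B_j := \prod_{i \in [m] : f(i) = j} b_i,
\]
with the convention that an empty product equals $1$. Since $0 \leq b_i \leq a_i \leq 1$ for every $i$, each $A_j$ and $B_j$ is a product of numbers in $[0,1]$ and we have $0 \leq B_j \leq A_j \leq 1$ for every $j \in [\ell]$.

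Next, I would observe that partitioning $[m]$ by the preimages $f^{-1}(1), \ldots, f^{-1}(\ell)$ gives the identities
\[
\prod_{i=1}^m a_i = \prod_{j=1}^\ell A_j, \qquad \prod_{i=1}^m b_i = \prod_{j=1}^\ell B_j.
\]
Applying Lemma~\ref{lem:prod-to-sum} to the two sequences $A_1, \ldots, A_\ell$ and $B_1, \ldots, B_\ell$ (which satisfy the required hypothesis $0 \leq B_j \leq A_j \leq 1$) yields
\[
0 \leq \prod_{j=1}^\ell A_j - \prod_{j=1}^\ell B_j \leq \sum_{j=1}^\ell (A_j - B_j),
\]
which, upon substituting the definitions of $A_j$ and $B_j$, is exactly the claimed inequality.

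There is no substantial obstacle here: the statement is really just Lemma~\ref{lem:prod-to-sum} applied to the type-aggregated products rather than to the original factors. The only thing worth being careful about is the boundary case of types $j$ with empty preimage $f^{-1}(j) = \emptyset$; these contribute $A_j = B_j = 1$ and the term $A_j - B_j = 0$, so they are harmless and can be dropped from the sum. Consequently the proof is a short two-line argument that invokes Lemma~\ref{lem:prod-to-sum} as a black box.
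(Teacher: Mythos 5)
Your proof is correct. It takes a mildly but genuinely different route from the paper: the paper proves \cref{lem:prod-to-sum-types} by a self-contained induction on $\ell$, splitting off the stages of type $\ell$ and redoing the telescoping identity $\prod a - \prod b = (\prod' a - \prod' b)\prod'' a + \prod' b\,(\prod'' a - \prod'' b)$ at each step, whereas you define the aggregated products $A_j, B_j$ and invoke \cref{lem:prod-to-sum} once, as a black box, on the sequences $A_1,\dots,A_\ell$ and $B_1,\dots,B_\ell$. The two arguments have identical mathematical content\,---\,the paper's induction on $\ell$ is in effect re-proving \cref{lem:prod-to-sum} for the type-aggregated quantities\,---\,but your reduction is shorter, avoids duplicating the induction, and makes explicit that \cref{lem:prod-to-sum-types} is literally \cref{lem:prod-to-sum} applied at the granularity of types rather than stages. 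You correctly verify the hypotheses needed for the reduction ($0 \le B_j \le A_j \le 1$, which holds because products of coordinatewise-ordered numbers in $[0,1]$ preserve the order, and the factorization of the full products over the partition $f^{-1}(1),\dots,f^{-1}(\ell)$), and your remark about empty preimages contributing $A_j = B_j = 1$ is a correct handling of the only boundary case. There is no gap.
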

\begin{proof}
We perform induction over $\ell$.
If $\ell = 1$, then there is only one stage type and the claim holds trivially.
Now suppose $\ell > 1$.
The inequality $0 \leq \left( \prod_{i=1}^m a_i \right) - \left( \prod_{i=1}^m b_i \right)$ follows from $0 \leq b_i \leq a_i$ for all $i \in [m]$.
Meanwhile, by relabeling the indices, suppose that $f(i) \in [\ell - 1]$ for $1 \leq i \leq k$ and $f(i) = \ell$ for $k+1 \leq i \leq m$, for some $k \in [m]$.
\begin{align*}
&\; \left( \prod_{i=1}^m a_i \right) - \left( \prod_{i=1}^m b_i \right)\\
= &\; \left( \left( \prod_{i=1}^{k} a_i \right) - \left( \prod_{i=1}^{k} b_i \right) \right) \cdot \left( \prod_{i=k+1}^{m} a_i \right) + \left( \prod_{i=1}^{k} b_i \right) \cdot \left( \left( \prod_{i=k+1}^{m} a_i \right) - \left( \prod_{i=k+1}^{m} b_i \right) \right)\\
\leq &\; \left( \left( \prod_{i=1}^{k} a_i \right) - \left( \prod_{i=1}^{k} b_i \right) \right) + \left( \left( \prod_{i=k+1}^{m} a_i \right) - \left( \prod_{i=k+1}^{m} b_i \right) \right) \tag{Since $\prod_{i=k+1}^{m} a_i \leq 1$ and $\prod_{i=1}^{k} b_i \leq 1$}\\
\leq &\; \sum_{j=1}^{\ell-1} \left( \left( \prod_{i \in [m] : f(i) = j} a_i \right) - \left( \prod_{i \in [m] : f(i) = j} b_i \right) \right) + \left( \left( \prod_{i=k+1}^{m} a_i \right) - \left( \prod_{i=k+1}^{m} b_i \right) \right) \tag{By induction hypothesis and $f(i) \in [m]$ for $1 \leq i \leq k$}\\
= &\; \sum_{j=1}^{\ell} \left( \left( \prod_{i \in [m] : f(i) = j} a_i \right) - \left( \prod_{i \in [m] : f(i) = j} b_i \right) \right) \tag{Since $f(i) = \ell$ for $k+1 \leq i \leq m$}
\end{align*}
\end{proof}

\begin{theorem}[Guarantees of \textsc{StagedCollapsedBandit}]
\label{thm:staged-collapsed-bandit-guarantees}
Let $\bP \in \R^{m \times k}$ be a SFIPP instance matrix with rows $\bP_1, \ldots, \bP_m$.
Suppose there is a single-stage bandit algorithm that achieves regret bound of $O(R_{T, p_1, \ldots, p_k})$ on the set of Bernoulli arms with success probabilities $p_1, \ldots, p_k$ over a horizon of $T$ rounds.
Let $f: [m] \to [\ell]$ be the stage type function mapping stage indices $s \in [m]$ to a stage type $j \in [\ell]$.
Then, there is an algorithm that achieving regret bound of $O(\sum_{j=1}^\ell R_{T,\bP'_j})$, where $\bP'_j = ( \prod_{i \in [m]: f(i) = j} p_{i,1}, \ldots, \prod_{i \in [m]: f(i) = j} p_{i,k}) \in [0,1]^k$.
\end{theorem}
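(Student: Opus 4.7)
The plan is to mimic the proof of \cref{thm:staged-bandit-guarantees} but to split the per-round regret across \emph{stage types} via \cref{lem:prod-to-sum-types} rather than across individual stages via \cref{lem:prod-to-sum}. For each type $j \in [\ell]$, set $q_{j,i} = \prod_{s \in f^{-1}(j)} p_{s,i}$, so that $\bP'_j = (q_{j,1}, \ldots, q_{j,k})$. Because every stage of type $j$ shares the common optimal action $i^*_j$, the maximizer of $q_{j,\cdot}$ is also $i^*_j$, and the SFIPP optimum factors as $\prod_{s=1}^m p_{s, i^*_s} = \prod_{j=1}^\ell q_{j, i^*_j}$. In particular, restricting the planner to commit to one action per stage type (as \textsc{StagedCollapsedBandit} does) does not inflate the benchmark in \cref{eq:SFIPP-benchmark-objective}.

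Now let $i_{t,j}$ denote the action $\textsc{Bandit}_j$ returns in round $t$, used at every stage of type $j$ that round. Applying \cref{lem:prod-to-sum-types} with $a_s = p_{s, i^*_s}$ and $b_s = p_{s, i_{t, f(s)}}$ (valid because $i^*_s$ maximizes $p_{s, \cdot}$) bounds the per-round regret $\prod_s p_{s, i^*_s} - \prod_s p_{s, i_{t, f(s)}}$ by $\sum_{j=1}^\ell (q_{j, i^*_j} - q_{j, i_{t,j}})$; summing over $t \in [T]$ and using linearity of expectation give
\[
\E[R(\textsc{StagedCollapsedBandit})] \leq \sum_{j=1}^\ell \E\left[\sum_{t=1}^T (q_{j, i^*_j} - q_{j, i_{t,j}})\right].
\]
The final step is to recognize each inner expectation as the expected regret of $\textsc{Bandit}_j$ when it faces a Bernoulli MAB with arm means $\bP'_j$: whenever it plays arm $i$, the feedback delivered by the algorithm is $1$ precisely when every stage of type $j$ in that round succeeded, which has probability $q_{j,i}$ independently across rounds. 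The assumed black-box guarantee then contributes $O(R_{T, \bP'_j})$ per type, and summing over $j \in [\ell]$ yields the claimed bound.

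The main obstacle, already flagged at the end of the proof of \cref{thm:staged-bandit-guarantees}, is that in some rounds $\textsc{Bandit}_j$ is queried for an action but receives no Bernoulli feedback because the round aborts at a stage of a different type \emph{before} all type-$j$ stages have been played. I would handle this with the same ``imaginary completion'' trick used there: couple the actual execution with one in which every unplayed stage is still simulated with a freshly drawn Bernoulli$(p_{s, i_{t, f(s)}})$ outcome, so that $\textsc{Bandit}_j$ sees exactly $T$ rounds of genuine Bernoulli$(q_{j, i_{t,j}})$ feedback and its regret is bounded by $O(R_{T, \bP'_j})$. Since the appended imaginary per-round regret contributions are non-negative, the coupled process only inflates the total, which suffices for proving the stated upper bound.
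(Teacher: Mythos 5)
Your proposal is correct and follows essentially the same route as the paper's proof: decompose the per-round regret across stage types via \cref{lem:prod-to-sum-types}, identify each $\textsc{Bandit}_j$ as facing a Bernoulli instance with means $\bP'_j$, and invoke the black-box regret guarantee after linearity of expectation. If anything you are more careful than the paper on two points it leaves implicit — that the collapsed benchmark $\prod_j q_{j,i^*_j}$ coincides with the SFIPP optimum because stages of a common type share an optimal action, and that rounds aborting before all type-$j$ stages are played must be handled by the ``imaginary completion'' coupling.
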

\begin{proof}
Consider the \textsc{StagedCollapsedBandit} algorithm (\cref{alg:stagedcollapsedbandit}) which runs $\ell$ instances of a single-stage bandit algorithm \textsc{Bandit}.
Since \textsc{StagedCollapsedBandit} only chooses an action once for each stage type within each round and provide a positive bandit feedback when \emph{all} stages succeed, the $j^{th}$ instance $\textsc{Bandit}_j$ is effectively performing a bandit instance with probabilities $\bP'_j = ( \prod_{i \in [m]: f(i) = j} p_{i,1}, \ldots, \prod_{i \in [m]: f(i) = j} p_{i,k}) \in [0,1]^k$ for any $f(s) = j$ and thus achieves an instance-dependent regret of $O(R_{T,\bP'_j})$ over $T$ rounds.
\begin{align*}
&\; \mathbb{E} \left[ R(\textsc{StagedCollapsedBandit}) \right]\\
= &\; \mathbb{E} \left[ \sum_{t=1}^T \left[ \left( \prod_{s=1}^m p_{s,i^*_s} \right) - \left( \prod_{s=1}^m p_{s,i_{t,s}} \right) \right] \right] \tag{By definition of \cref{eq:SFIPP-benchmark-objective}}\\
\leq &\; \mathbb{E} \left[ \sum_{t=1}^T \sum_{j=1}^\ell \left( \left( \prod_{i \in [m]: f(i) = j} p_{j,i^*_j} \right) - \left( \prod_{i \in [m]: f(i) = j} p_{j,i_{t,j}} \right) \right) \right] \tag{By \cref{lem:prod-to-sum-types} since $p_{j,i^*_j} \geq p_{j,i_{t,j}}$ always}\\
= &\; \sum_{j=1}^\ell \mathbb{E} \left[ \sum_{t=1}^T \left( \left( \prod_{i \in [m]: f(i) = j} p_{j,i^*_j} \right) - \left( \prod_{i \in [m]: f(i) = j} p_{j,i_{t,j}} \right) \right) \right] \tag{Linearity of expectation}\\
\in &\; O(\sum_{j=1}^\ell R_{T,\bP'_j}) \tag{Since each $\textsc{Bandit}_j$ incurs a regret of $O(R_{T,\bP'_j})$}
\end{align*}
\end{proof}

\begin{theorem}[Guarantees of \textsc{StagedCollapsedFineGrainedBandit}]
\label{thm:staged-collapsed-fine-grained-bandit-guarantees}
Suppose there is a single-stage bandit algorithm that achieves regret bound of $O(R_{T, p_1, \ldots, p_k})$ on the set of Bernoulli arms with success probabilities $p_1, \ldots, p_k$ over a horizon of $T$ rounds.
Let $f: [m] \to [\ell]$ be the stage type function mapping stage indices $s \in [m]$ to a stage type $j \in [\ell]$.
Under the assumption that all stages of the same type are \emph{identical}, let $\bQ_j = (q_{j,1}, \ldots, q_{j,k})$ denote the action probabilities of stage type $j \in [\ell]$.
Then, there is an algorithm that achieving a regret bound of 
$O(\sum_{j=1}^\ell R_{T \cdot |f^{-1}(j)|, \bQ_{j}})$. 
\end{theorem}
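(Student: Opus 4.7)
The plan is to closely mirror the proof of \cref{thm:staged-bandit-guarantees}, with two modifications: we group stages by their type $j\in[\ell]$ instead of treating each stage separately, and we account for the fact that each $\textsc{Bandit}_j$ may be queried up to $|f^{-1}(j)|$ times per round rather than once, so its effective horizon is $T\cdot|f^{-1}(j)|$ rather than $T$.

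I would start from the expected regret expression in \cref{eq:SFIPP-benchmark-objective}. Under the identical-stages assumption, the probability $p_{s,i}$ equals $q_{f(s),i}$, and the per-stage optimum at stage $s$ depends only on the type $f(s)$; call it $i^*_{f(s)} = \argmax_i q_{f(s),i}$. Applying \cref{lem:prod-to-sum} bounds the per-round product difference by the sum of per-stage additive gaps $\sum_{s=1}^m \bigl(q_{f(s), i^*_{f(s)}} - q_{f(s), i_{t,s}}\bigr)$, which I would then regroup by type as $\sum_{j=1}^\ell \sum_{s\in f^{-1}(j)} \bigl(q_{j, i^*_j} - q_{j, i_{t,s}}\bigr)$.

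The central observation is that because all stages of type $j$ share the same Bernoulli probabilities $\bQ_j$, every pull of $\textsc{Bandit}_j$\,---\,whether originating from stage $s$ or from a different stage $s'$ with $f(s)=f(s')=j$\,---\,is a genuine i.i.d.\ sample from the Bernoulli arm set $\bQ_j$. Thus $\textsc{Bandit}_j$ is effectively being executed as a standard Bernoulli MAB on $\bQ_j$, and the inner double sum for fixed $j$ is exactly its empirical cumulative regret against the optimum arm $i^*_j$. In each round $t\in[T]$ the algorithm queries $\textsc{Bandit}_j$ at most $|f^{-1}(j)|$ times (and possibly fewer, if an earlier stage of this round triggered \textbf{break}), so the total number of queries over $T$ rounds is at most $T\cdot|f^{-1}(j)|$. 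Invoking the assumed MAB guarantee on $\bQ_j$ at horizon $T\cdot|f^{-1}(j)|$ then yields an $O(R_{T\cdot|f^{-1}(j)|,\bQ_j})$ bound for this type, and summing over $j\in[\ell]$ via linearity of expectation finishes the proof.

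The main obstacle is handling the case where $\textsc{Bandit}_j$ is pulled strictly fewer than $T\cdot|f^{-1}(j)|$ times due to early termination within a round. I would resolve this exactly as in the proof of \cref{thm:staged-bandit-guarantees}: imagine that after the algorithm terminates, $\textsc{Bandit}_j$ continues to play against fresh Bernoulli samples from $\bQ_j$ until it has performed $T\cdot|f^{-1}(j)|$ pulls in total. Since the real SFIPP regret contribution from type $j$ is a nonnegative sum of per-stage suboptimality gaps, it is upper-bounded by the expected cumulative regret of $\textsc{Bandit}_j$ over this extended horizon, which is precisely what the assumed MAB bound controls.
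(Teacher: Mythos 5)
Your proposal is correct and follows essentially the same route as the paper's proof: apply \cref{lem:prod-to-sum} to pass to additive per-stage gaps, regroup the gaps by stage type using the identical-stages assumption, and charge each type $j$ to $\textsc{Bandit}_j$ run over an effective horizon of $T\cdot|f^{-1}(j)|$ pulls. Your explicit treatment of early termination via imaginary continued pulls matches the device the paper already uses in \cref{thm:staged-bandit-guarantees}, so nothing further is needed.
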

\begin{proof}
Let us denote the optimal action in stages of type $j$ by $l^*_j \in \argmax_{i\in [k]} q_{j,i}$.
Consider the \textsc{StagedCollapsedFineGrainedBandit} algorithm (\cref{alg:stagedcollapsedfinegrainedbandit}).
For $j \in [\ell]$, $t\in [T]$, and $b \in [|f^{-1}(\ell)|]$, let $l_{t,j,b}$ be the action returned by $\textsc{Bandit}_j$ for the $b$th stage of type $j$ in round $t$. 

\begin{align*}
&\; \mathbb{E} \left[ R(\textsc{StagedCollapsedFineGrainedBandit}) \right]\\
= &\; \mathbb{E} \left[ \sum_{t=1}^T \left[ \left( \prod_{s=1}^m p_{s,i^*_s} \right) - \left( \prod_{s=1}^m p_{s,i_{t,s}} \right) \right] \right] \tag{By definition of \cref{eq:SFIPP-benchmark-objective}}\\
\leq &\; \mathbb{E} \left[ \sum_{t=1}^T \sum_{s=1}^m \left( p_{s,i^*_s} - p_{s,i_{t,s}} \right) \right] \tag{By \cref{lem:prod-to-sum} since $p_{s,i^*_s} \geq p_{s,i_{t,s}}$ always}\\
= &\; \mathbb{E} \left[ \sum_{t=1}^T \sum_{j=1}^\ell \sum_{s\in f^{-1}(j)} \left( p_{s,i^*_s} - p_{s,i_{t,s}} \right) \right] \\
= &\; \sum_{j=1}^\ell \mathbb{E} \left[ \sum_{t=1}^T  \sum_{s\in f^{-1}(j)} \left( p_{s,i^*_s} - p_{s,i_{t,s}} \right) \right] \tag{Linearity of expectation}\\
= &\; \sum_{j=1}^\ell \mathbb{E} \left[ \sum_{t=1}^T  \sum_{s\in f^{-1}(j)} \left( q_{j,l^*_{j}} - q_{j,i_{t,s}} \right) \right] \\
= &\; \sum_{j=1}^\ell \mathbb{E} \left[ \sum_{t=1}^T  \sum_{b\in [|f^{-1}(j)|]} \left( q_{j,l^*_{j}} - q_{j,l_{t,j,b}} \right) \right] \\
\in &\; O(\sum_{j=1}^\ell R_{|f^{-1}(j)|T,\mathbf{Q}_j})
\end{align*}
The final inclusion step is because, for any $j \in [\ell]$, $\textsc{Bandit}_j$ incurs a regret of $O(R_{|f^{-1}(j)|T,\bP_s})$ when queried $T \cdot |f^{-1}(j)|$ times.
\end{proof}

In the following, we provide an example illustrating that the theoretical analysis of neither algorithm dominates the other; one may be better than the other depending on the actual distribution of probabilities in the underlying $\bP$.
Consider the special case where all stages of the same type are \emph{identical}.
That is, given SFIPP instance matrix $\bP \in \R^{m \times k}$ with rows $\bP_1, \ldots, \bP_m$, $f(i) = f(i')$ implies that $\bP_i = \bP_{i'}$.
Henceforth, let us denote the action probabilities of stage type $j \in [\ell]$ by $\bQ_j = (q_1, \ldots, q_k)$ and its $r^{th}$ power by $\bQ^r_j = (q^r_1, \ldots, q^r_k)$.

Now, suppose we use the UCB algorithm for the $\textsc{Bandit}$s in both algorithms, where we again write $O(R_{T,{p_1,\ldots,p_k}})$ to denote the regret the UCB algorithm encounters on a set of Bernoulli arms with success probabilities $p_1\geq \dots\geq p_k$ over a horizon of $T$ rounds.
In our analysis in \cref{sec:appendix-known-stage-types}, we show that \textsc{StagedCollapsedBandit} would incur an expected regret of at most $O(\sum_{j=1}^\ell R_{T, \bQ^r_j})$ while \textsc{StagedCollapsedFineGrainedBandit} incurs an expected regret of at most $O(\sum_{j=1}^\ell R_{T \cdot |f^{-1}(j)|, \bQ_j})$.
Recalling that $O(R_{T,{p_1,\ldots,p_k}})\in O(\log (T) \cdot \sum_{i=2}^k \frac{1}{p_1 - p_i})$, these bounds suggest that \textsc{StagedCollapsedFineGrainedBandit} will perform advantageous when there are large differences between the success probabilities of actions for an individual stage, whereas \textsc{StagedCollapsedBandit} is favorable when multiplying  success probabilities of an action across stages of the same type leads to increased gaps.
The following example makes this intuition more concrete.

\begin{example}
Suppose $m = 10$ and $\ell = 2$ over a horizon of $T = 10$ with SFIPP matrices $\bP$ and $\bP'$, where the first five and last five stages are identical in both $\bP$ and $\bP'$:
\[
\bP =
\begin{bmatrix}
0.9 & 0.8\\
\vdots & \vdots\\
0.9 & 0.8\\
0.7 & 0.8\\
\vdots & \vdots\\
0.7 & 0.8
\end{bmatrix}
\qquad
\bP' =
\begin{bmatrix}
0.9 & 0.1\\
\vdots & \vdots\\
0.9 & 0.1\\
0.2 & 0.3\\
\vdots & \vdots\\
0.2 & 0.3\\
\end{bmatrix}
\]
With respect to $\bP$, UCB has an instance-dependent regret bound of $O(\frac{\log(10)}{0.9 - 0.8})$ on type $1$ stages and $O(\frac{\log(10)}{0.8 - 0.7})$ on type $2$ stages.
So, we expect \textsc{StagedCollapsedBandit} to outperform \textsc{StagedCollapsedFineGrainedBandit} since $\frac{\log(10)}{0.9^5 - 0.8^5} + \frac{\log(10)}{0.8^5 - 0.7^5} \approx 23 < 78 \approx \frac{\log(50)}{0.9 - 0.8} + \frac{\log(50)}{0.8 - 0.7}$.
Meanwhile, with respect to $\bP'$, UCB has an instance-dependent regret bound of $O(\frac{\log(10)}{0.9 - 0.1})$ on type $1$ stages and $O(\frac{\log(10)}{0.3 - 0.2})$ on type $2$ stages.
So, we expect \textsc{StagedCollapsedBandit} to outperform \textsc{StagedCollapsedFineGrainedBandit} since $\frac{\log(10)}{0.9^5 - 0.1^5} + \frac{\log(10)}{0.3^5 - 0.2^5} \approx 1095 > 44 \approx \frac{\log(50)}{0.9 - 0.1} + \frac{\log(50)}{0.3 - 0.2}$.
\end{example}

\section{Deferred proofs}
\label{sec:appendix-proofs}

In this section, we give the deferred proof details.

\onestagelowerbound*
\begin{proof}
Without loss of generality, by relabeling indices, we may assume that any deterministic algorithm queries the array indices in order from $1$ to $k$ when searching for the first index containing a $\Pone$.
Now, consider the uniform distribution of all $\binom{k}{z}$ binary arrays over $\bA_z$.
In general, for $0 \leq i \leq z$, there will be $\binom{k-i-1}{z-i}$ arrays starting with $i$ $\Pzero$s followed by a $\Pone$.
Each of these arrays require $(i+1)$ queries by the deterministic algorithm to locate the first index containing a $\Pone$.
So, in expectation over the uniform distribution, the deterministic algorithm will require
\[
\frac{1}{\binom{k}{z}} \sum_{i=0}^{z} (i+1) \cdot \binom{k-i-1}{z-i}
= \frac{k+1}{k+1-z}
\]
queries. Subtracting $1$ from the above, we get the expected number of
queries that the deterministic algorithm performs \emph{before} it 
queries the index with~$\Pone$.
By Yao's lemma \cite{yao1977probabilistic}, this means that any algorithm requires at least $\frac{z}{k+1-z}$ index queries 
before it identifies an index with $\Pone$ within arrays with exactly $z$ ones in the worst case.
\end{proof}

\onestageupperbound*
\begin{proof}
Let us define a random variable $Y$ counting the number of selections until, \emph{but excluding}, an $\Pone$ was selected.
We will show that $\E(Y)= \frac{1}{k+1-z}$.
To begin, let us number the $z$ $\Pzero$s in any order.
For each $i \in [z]$, we introduce an indicator random
variable $Z_i$, that is, $Z_i$ takes value $1$ if the $i$-th $\Pzero$ was selected before \emph{any} $\Pone$, or value $0$ otherwise.
Over the uniform distribution of all possible $k!$ sequences, $\E[Z_i]$ is exactly the probability that the $i^{th}$ $\Pzero$ occurs before all the $k-z$ $\Pone$s, which happens with proportion $\frac{1}{k+1-z}$, and so $\E(Z_i) = \frac{1}{k-z+1}$; see \cref{example:counting-Z_i} for an example of this counting argument.
Thus, by linearity of expectation, we see that
\[
\E(Y)
= \E \left(\sum_{i=1}^{z} Z_i \right)
= \sum_{i=1}^{z} \E(Z_i)
= \frac{z}{k+1-z}
\]
\end{proof}

\begin{example}
\label{example:counting-Z_i}
Suppose $k = 4$ and $z = 2$.
Let us fix an arbitrary identity to each $\Pzero$ and $\Pone$: $\{{\color{blue}\Pzero}, {\color{red}\Pzero}, {\color{green!70!black}\Pone}, {\color{orange}\Pone}\}$.
Now, let us count the number of times {\color{blue}$\Pzero$} appears before any occurrence of $\Pone$s amongst all possible length $k$ binary vectors with 2 zeroes:
\[
\begin{matrix}
{\color{red}\Pzero}{\color{blue}\Pzero}{\color{green!70!black}\Pone}{\color{orange}\Pone}
& {\color{blue}\Pzero}{\color{red}\Pzero}{\color{green!70!black}\Pone}{\color{orange}\Pone}
& {\color{blue}\Pzero}{\color{green!70!black}\Pone}{\color{red}\Pzero}{\color{orange}\Pone}
& {\color{blue}\Pzero}{\color{green!70!black}\Pone}{\color{orange}\Pone}{\color{red}\Pzero}\\
{\color{red}\Pzero}{\color{blue}\Pzero}{\color{orange}\Pone}{\color{green!70!black}\Pone}
& {\color{blue}\Pzero}{\color{red}\Pzero}{\color{orange}\Pone}{\color{green!70!black}\Pone}
& {\color{blue}\Pzero}{\color{orange}\Pone}{\color{red}\Pzero}{\color{green!70!black}\Pone}
& {\color{blue}\Pzero}{\color{orange}\Pone}{\color{green!70!black}\Pone}{\color{red}\Pzero}
\end{matrix}
\]
We see that there are 8 possibilities amongst $4! = 24$ the total of possibilities.
Since we are always uniformly selecting indices, this occurs with probability $\frac{1}{k+1-z} = \frac{1}{3} = \frac{8}{24!}$.
\end{example}

\hardestmultistagesetup*
\begin{proof}
Consider the following distribution of $z_1, \ldots, z_m$ into $m$ parts such that $z_1, \ldots, z_m = a(k+1) + b$: $\bz^* = (z^*_1, \ldots, z^*_m) = (\underbrace{k-1, \ldots, k-1}_{a}, b, 0, \ldots,0)$, which concentrates as many counts in the coordinates as possible, up to the constraint of $z_s \leq k+1$.
Such a distribution would correspond to the sum
\[
\sum_{s=1}^m \frac{z^*_s}{k+1-z^*_s}
= \sum_{s=1}^m \frac{z^*_s}{k+1-z^*_s}
= \frac{a}{2} + \frac{b}{k+1-b}
\]
Notice that this expression holds regardless of which index positions within $[m]$ are $0$ as long as the \emph{multiset} of counts across all $m$ indices contain $a$ counts of $(k-1)$, $1$ count of $r$, and the others being $0$.

Now, let us define the function $g(\bz) = g(z_1, \ldots, z_m) = \sum_{s=1}^m \frac{z_s}{k+1-z_s}$.
Since the function $g$ is symmetric in the input coordinates, we may assume without loss of generality that $z_1 \geq \ldots \geq z_m$ for any input $\bz = (z_1, \ldots, z_m)$ to $g$.
We will argue that $g(z_1, \ldots, z_m)$ is maximized when the counts are distributed according to $\bz^*$ via an exchange argument.

For any other vector $\bz \neq \bz^*$, there must exist two indices $i,j \in [m]$ such that $i \leq a < j$ such that $z^*_i = k-1 > z_i \geq z_j > z^*_j$ due to the constraints on the distributions of zeroes.
This means that we can form another candidate zeroes vector $\bz' = (z'_1, \ldots, z'_m)$ where
\[
z'_s =
\begin{cases}
z_s & \text{if $s \neq \{i,j\}$}\\
z_s + 1 & \text{if $s = i$}\\
z_s - 1 & \text{if $s = j$}
\end{cases}
\]
Now, observe that
\begin{align*}
&\; g(\bz') - g(\bz)\\
= &\; \sum_{s=1}^m \frac{z'_s}{k+1-z'_s} - \sum_{s=1}^m \frac{z_s}{k+1-z_s} \tag{By definition of $g$}\\
= &\; \frac{z'_i}{k+1-z'_i} + \frac{z'_j}{k+1-z'_j} - \frac{z_i}{k+1-z_i} - \frac{z_j}{k+1-z_j} \tag{By definition of $\bz$ and $\bz'$}\\
= &\; \frac{z_i+1}{k-z_i} + \frac{z_j-1}{k+2-z_j} - \frac{z_i}{k+1-z_i} - \frac{z_j}{k+1-z_j} \tag{Since $z'_i = z_i + 1$ and $z'_j = z_j - 1$}\\
= &\; \frac{(z_i+1) (k+1-z_i) - z_i (k-z_i)}{(k-z_i)(k+1-z_i)} + \frac{(z_j-1) (k+1-z_j) - z_j (k+2-z_j)}{(k+2-z_j)(k+1-z_j)} \tag{Grouping the $z_i$ and $z_j$ terms}\\
= &\; \frac{k+1}{(k-z_i)(k+1-z_i)} + \frac{-k-1}{(k+2-z_j)(k+1-z_j)} \tag{Simplifying}
\end{align*}
Now, observe that $k+1 > k > k-1 \geq z_i + 1 > z_i \geq z_j > 0$ implies that ${\color{blue}k+2-z_j \geq k+1-z_i}$ and ${\color{red}k+1-z_j \geq k-z_i}$.
Therefore, $g(\bz') - g(\bz) = (k+1) \cdot \left( \frac{1}{{\color{red}(k-z_i)}{\color{blue}(k+1-z_i)}} - \frac{1}{{\color{blue}(k+2-z_j)}{\color{red}(k+1-z_j)}} \right) \geq 0$
Furthermore, by construction, we see that $\sum_{s=1}^m |z^*_s - z'_s| < \sum_{s=1}^m |z^*_s - z_s|$.
So, by repeating the above argument we can transform any distribution into $\bz^*$ whilst monotonically non-decreasing the function $g$, i.e.\ $\bz^*$ maximizes $g$.
\end{proof}

\prodtosum*
\begin{proof}
We perform induction on $m$.
When $m = 1$, we trivially have $0 \leq a_1 - b_1$.
When $m > 1$, we first observe that
\[
\left( \prod_{i=1}^m a_i \right) - \left( \prod_{i=1}^m b_i \right)
= \left( \left( \prod_{i=1}^{m-1} a_i \right) - \left( \prod_{i=1}^{m-1} b_i \right) \right) \cdot a_m + \left( \prod_{i=1}^{m-1} b_i \right) \cdot \left(a_m - b_m \right)
\]
By induction hypothesis, $\left( \prod_{i=1}^{m-1} a_i \right) - \left( \prod_{i=1}^{m-1} b_i \right) \geq 0$ and so the entire expression is at least $0$ since $a_m \geq b_m \geq 0$.
Meanwhile, $\left( \prod_{i=1}^{m-1} a_i \right) - \left( \prod_{i=1}^{m-1} b_i \right) \leq \sum_{i=1}^{m-1} (a_i - b_i)$ by induction hypothesis and so
\begin{align*}
&\; \left( \prod_{i=1}^m a_i \right) - \left( \prod_{i=1}^m b_i \right)\\
\leq &\; \left( \sum_{i=1}^{m-1} (a_i - b_i) \right) \cdot a_m + \left( \prod_{i=1}^{m-1} b_i \right) \cdot \left(a_m - b_m \right) \tag{By induction hypothesis}\\
\leq &\; \left( \sum_{i=1}^{m-1} (a_i - b_i) \right) + \left(a_m - b_m \right) \tag{Since $a_m \leq 1$, $\prod_{i=1}^{m-1} b_i \leq 1$, and $a_m \geq b_m$}\\
= &\; \sum_{i=1}^m (a_i - b_i)
\qedhere
\end{align*}
\end{proof}

\section{Details of Experiments}
\label{sec:appendix-plots}

\subsection{Generation of Staged Experiment Instances}
\label{sec:appendix-staged-generation}
To generate $\bP$ with $j$ different stage types, we swap the $((s \% j) + 1)$-th action probability with the largest action probability for each stage so that the the optimal action for $s$-th stage always occurs at index $(s \% j) + 1$.
For example, suppose we generated the following SFIPP matrix
\[
\bP =
\begin{bmatrix}
0.124 & 0.357 & \textbf{0.432} & 0.291 & 0.085\\
0.214 & 0.076 & 0.389 & \textbf{0.407} & 0.153\\
0.265 & 0.178 & 0.099 & 0.314 & \textbf{0.348}\\
\textbf{0.428} & 0.067 & 0.209 & 0.134 & 0.275
\end{bmatrix}
\]
where the optimal actions are bolded for each stage.
With $2$ different stage types, we would modify it to
\[
\bP' =
\begin{bmatrix}
{\color{blue}\textbf{0.432}} & 0.357 & {\color{blue}0.124} & 0.291 & 0.085\\
0.214 & {\color{red}\textbf{0.407}} & 0.389 & {\color{red}0.076} & 0.153\\
{\color{blue}\textbf{0.348}}& 0.178 & 0.099 & 0.314 & {\color{blue}0.265} \\
{\color{red}0.067} & {\color{red}\textbf{0.428}} & 0.209 & 0.134 & 0.275
\end{bmatrix}
\]
where the colors indicate the values that are swapped.
After modification, the optimal action for stages $1$ and $3$ will be action $1$ while the optimal action for stages $2$ and $4$ will be action $2$.
In this way, every odd or even stage will be of the same type whilst having differing optimal action indices across types.

\begin{figure}[htb]
    \centering
    \includegraphics[width=0.7\linewidth]{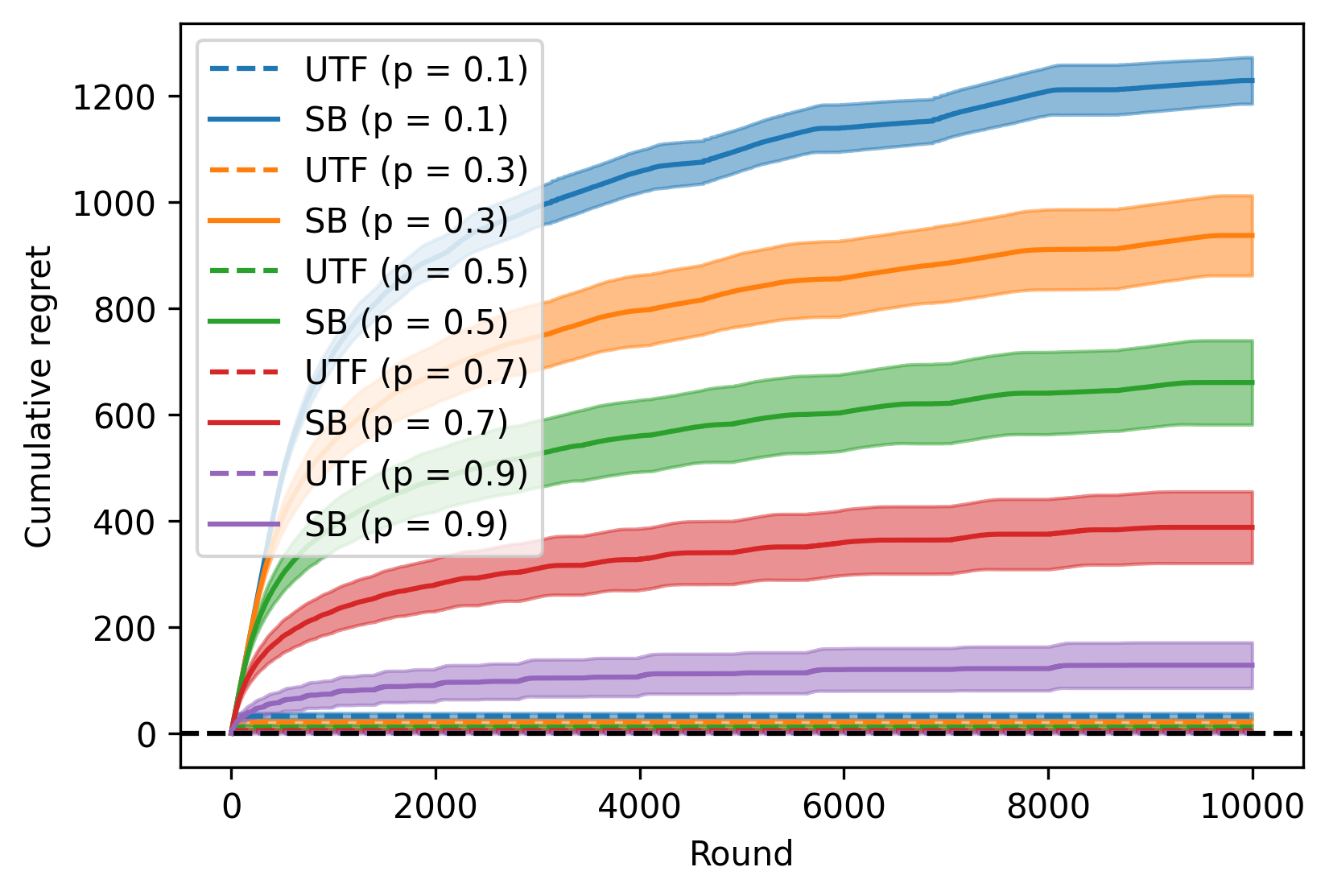}
    \caption{Experimental results on deterministic processes. The bands around the curves represent standard deviations.}
    \label{fig:experiment1-error-bars}
\end{figure}

\begin{figure}[htb]
    \centering
    \includegraphics[width=0.7\linewidth]{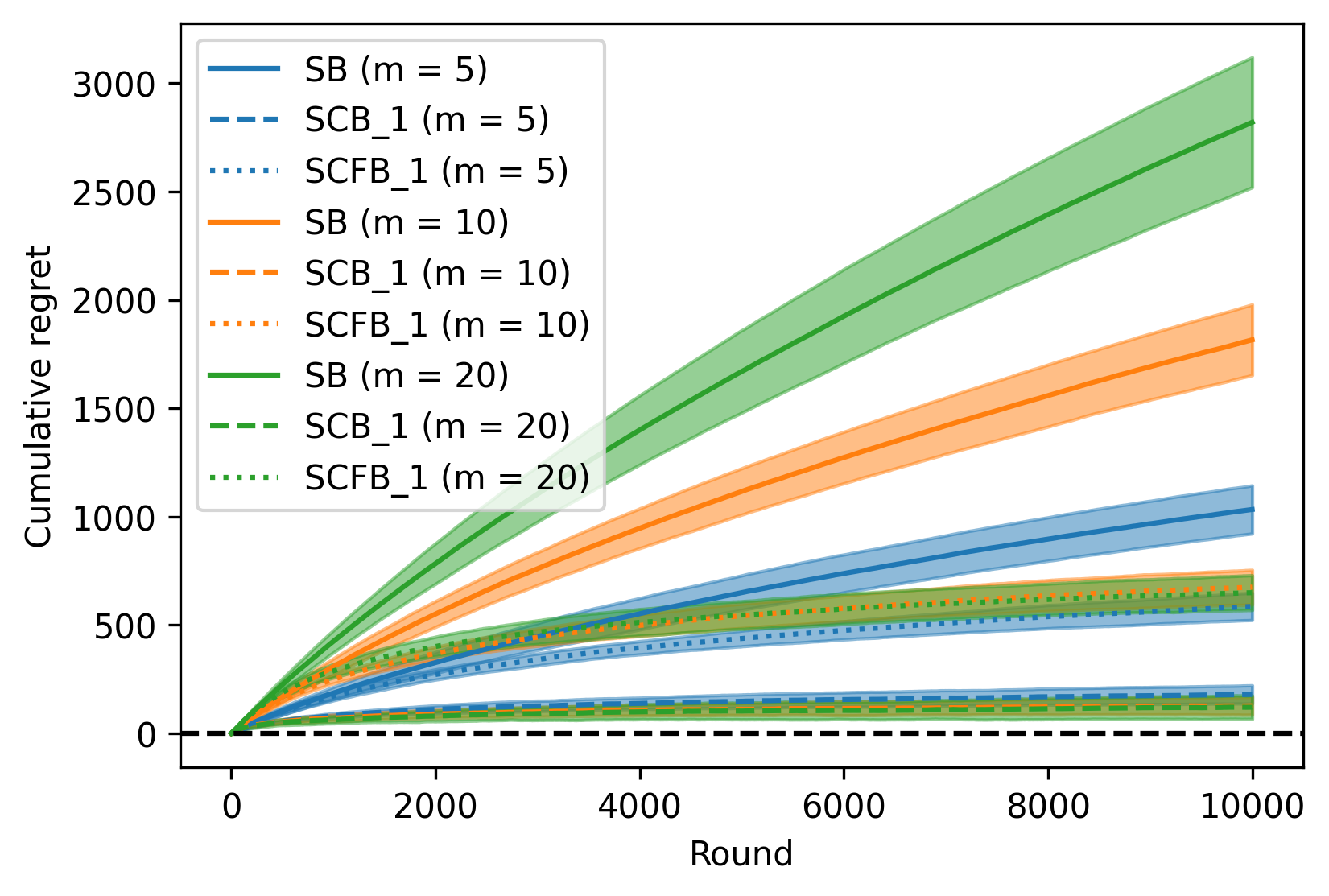}
    \caption{Experimental results on probabilistic processes with one stage type, i.e.\ all stages have the same \emph{unknown} optimal action index, but differing number of stages. Observe that as $m$ grows, the difference in accumulated regret widens between knowing whether to collapse all stages into a single ``meta'' stage or not. The bands around the curves represent standard deviations.}
    \label{fig:experiment2-error-bars}
\end{figure}

\begin{figure}[htb]
    \centering
    \includegraphics[width=0.7\linewidth]{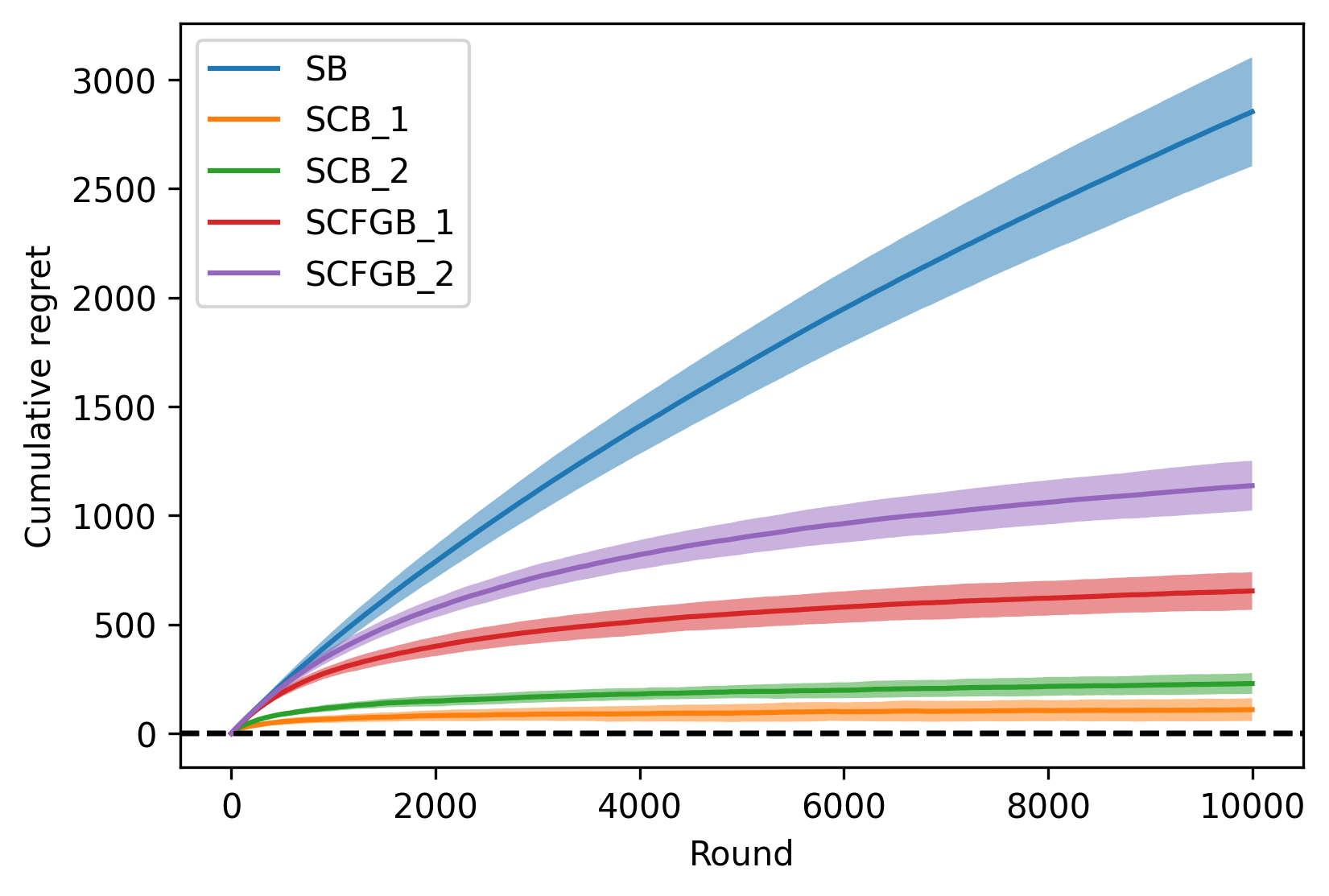}
    \caption{Experimental results on probabilistic processes with one stage type, i.e.\ all stages have the same \emph{unknown} optimal action index. $\bP$ entries are generated from $\mathrm{Beta}(\alpha = 10, \beta = 1)$. Observe that having the information to collapse into one (blue) or two (orange) stages leads to significant improvements in the accumulated regret. The bands around the curves represent standard deviations.}
    \label{fig:experiment3-error-bars}
\end{figure}

\begin{figure}[htb]
    \centering
    \includegraphics[width=0.7\linewidth]{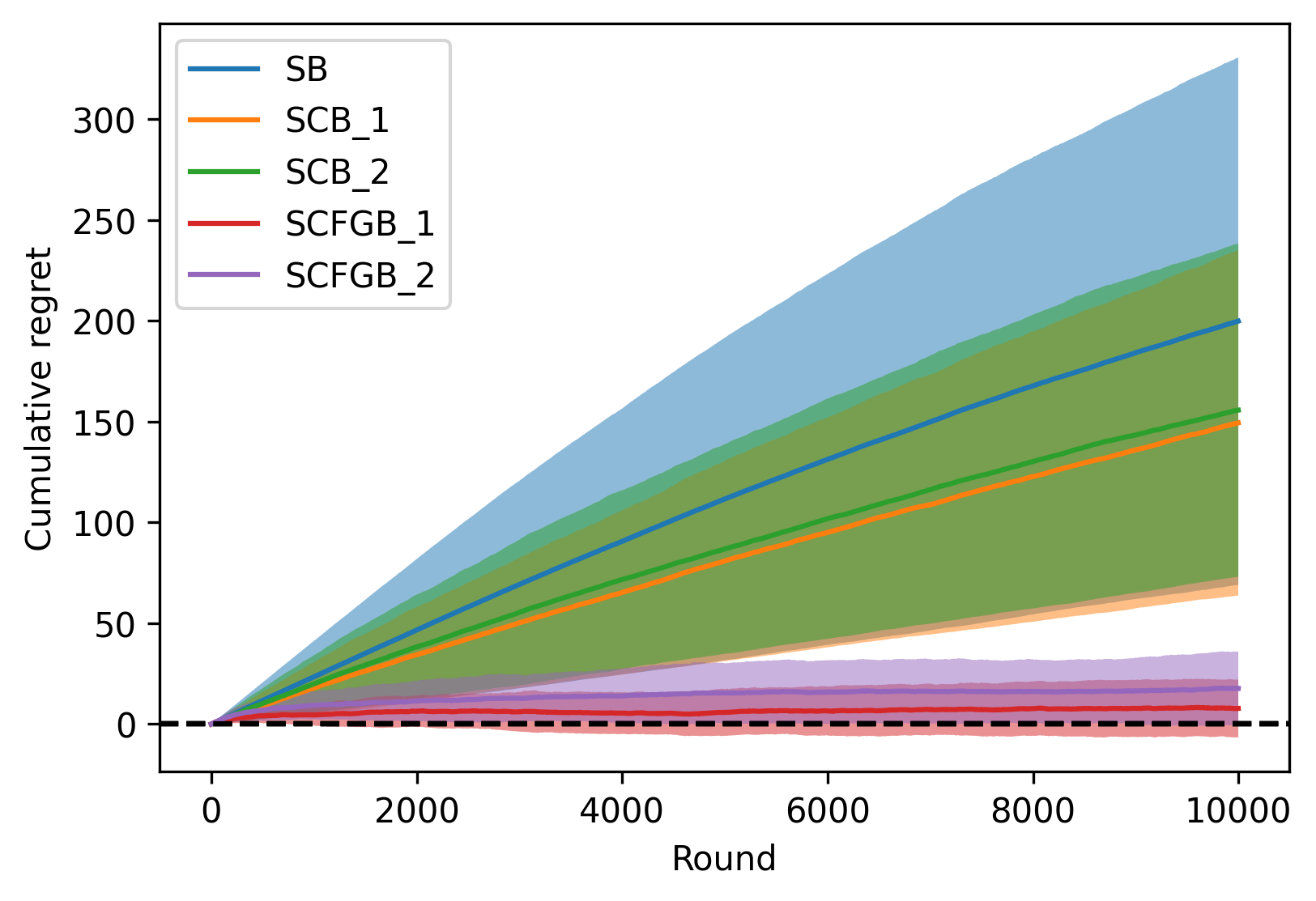}
    \caption{Experimental results on probabilistic processes with one stage type, i.e.\ all stages have the same \emph{unknown} optimal action index. $\bP$ entries are generated from $\mathrm{Beta}(\alpha = 1, \beta = 1)$. Observe that having the information to collapse into one (blue) or two (orange) stages leads to significant improvements in the accumulated regret. The bands around the curves represent standard deviations.}
    \label{fig:experiment3-easy-error-bars}
\end{figure}

\begin{figure}[htb]
    \centering
    \includegraphics[width=0.7\linewidth]{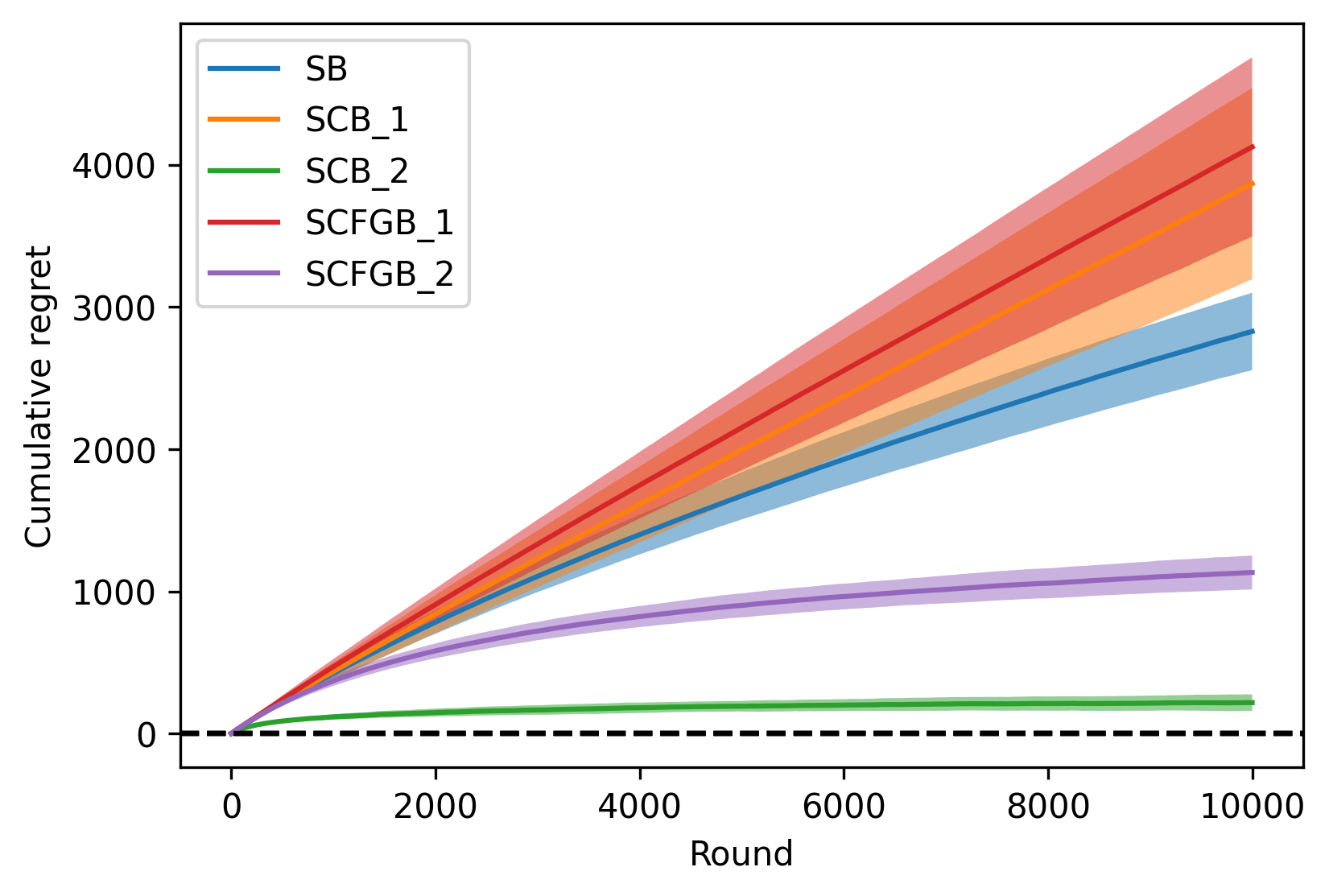}
    \caption{Experimental results on probabilistic processes with two different alternating stage types. $\bP$ entries are generated from $\mathrm{Beta}(\alpha = 10, \beta = 1)$. The bands around the curves represent standard deviations.}
    \label{fig:experiment4-error-bars}
\end{figure}

\begin{figure}[htb]
    \centering
    \includegraphics[width=0.7\linewidth]{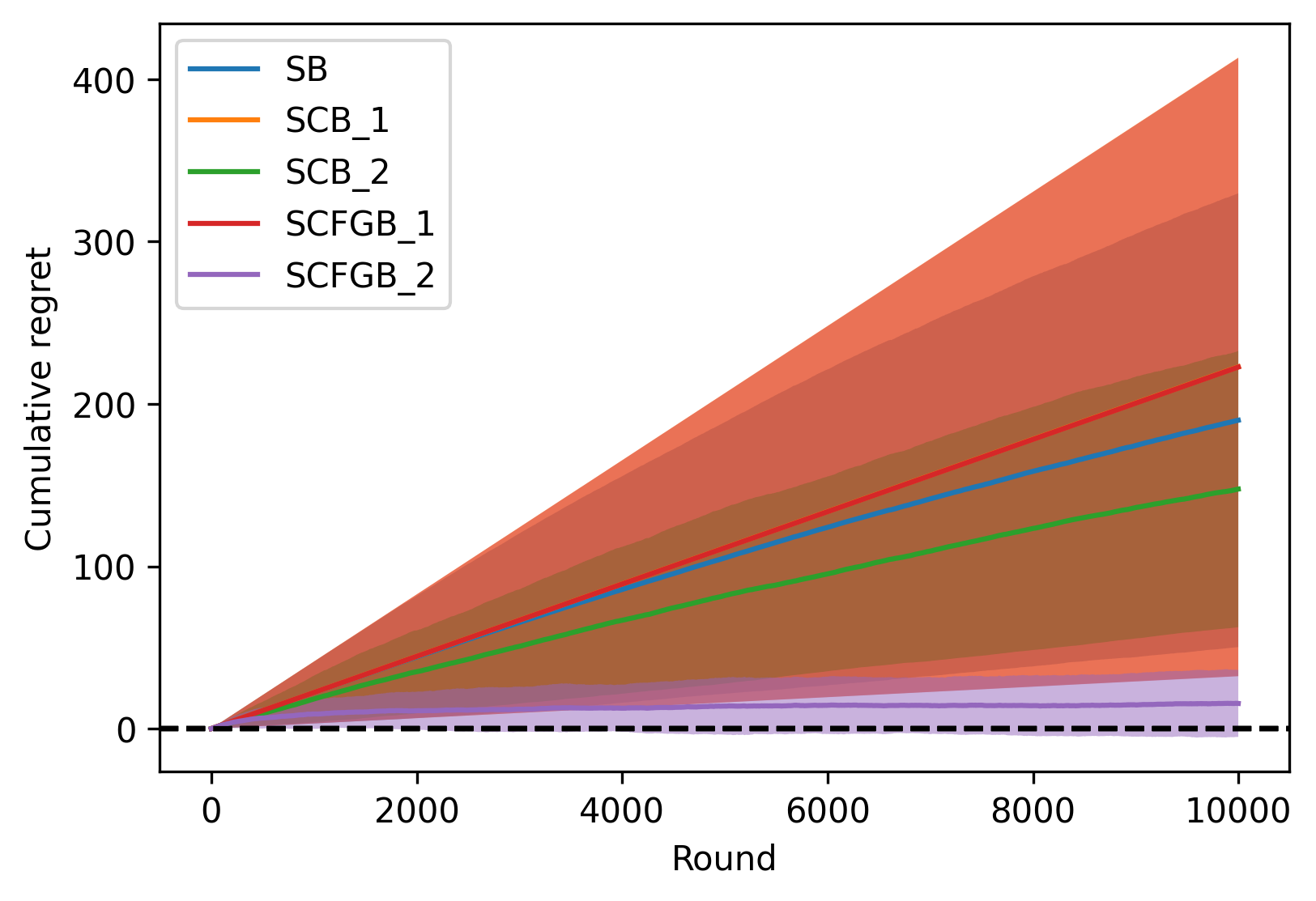}
    \caption{Experimental results on probabilistic processes with two different alternating stage types. $\bP$ entries are generated from $\mathrm{Beta}(\alpha = 1, \beta = 1)$. The bands around the curves represent standard deviations.}
    \label{fig:experiment4-easy-error-bars}
\end{figure}

\end{document}